\documentclass[11pt,a4paper]{scrartcl}
\usepackage[round]{natbib}

\usepackage[top=3cm, bottom=3cm, left=3cm, right=3cm]{geometry}

\usepackage{booktabs}

\usepackage{color}
\usepackage{latexsym}              % symbols
\usepackage{amsmath}               % great math stuff
\usepackage{amssymb}               % great math symbols
\usepackage{amsfonts}              % for blackboard bold, etc
\usepackage{amsthm}                % for theorems, http://tex.stackexchange.com/a/130655
\usepackage{multirow}
\usepackage{tikz}                  % to make drawing with TikZ
\usetikzlibrary{arrows,positioning,shapes}
\usepackage{tcolorbox}
\usepackage{placeins}
\usepackage{subcaption}
\usepackage{algorithm}
\usepackage{bm}
\usepackage{thmtools}
\usepackage{thm-restate}
\usepackage{algorithm}
\usepackage{algorithmic}

\usepackage[english]{babel} %or french 
\RequirePackage[%
  pdfstartview=FitH,%
  breaklinks=true,%
  bookmarks=true,%
  colorlinks=true,%
  linkcolor= blue,
  anchorcolor=blue,%
  citecolor=orange,
  filecolor=blue,%
  menucolor=blue,%
  urlcolor=blue%
  ]{hyperref}
  \usepackage[capitalize,nameinlink,noabbrev]{cleveref}

\renewenvironment{proof}[1][\proofname]{{\bfseries #1.}}{\qed \\ }

\makeatother

\theoremstyle{plain}  % Plain style for theorem, defn, lemma, proposition, corollary
\newtheorem{theorem}{Theorem}[section]

\newtheorem{lemma}[theorem]{Lemma}
\newtheorem{proposition}[theorem]{Proposition}

\newcommand{\calF}{\mathcal{F}}

\newcommand{\calW}{\mathcal{W}}
\newcommand{\E}{\mathbb{E}}
\newcommand{\var}{\mathrm{var}}
\newcommand{\bbP}{\mathbb{P}}
\newcommand{\bbQ}{\mathbb{Q}}

\newcommand{\bbH}{\mathbb{H}}

\newcommand{\Wassone}{\calW_1}
\newcommand{\Ptrue}{\bbP^\ast}
\newcommand{\mutrue}{\mu^\ast}
\newcommand{\Wnopt}{W_n^\ast(X_{1:n},\mu)}
\newcommand{\lambdaopt}{\lambda^\ast}
\newcommand{\Wn}{W_n(X_{1:n},\mu)}
\newcommand{\CS}{C_{\alpha,n}(X_{1:n})}
\newcommand{\lambdahat}{\widehat\lambda}
\newcommand{\predictive}{\bbQ_{n|n-1}(\cdot\mid X_{1:n-1})}

\DeclareMathOperator{\KL}{KL}

\begin{document}
\title{Asymptotically Log-Optimal Bayes-Assisted Confidence Sequences for Bounded Means}

\author{
  Valentin Kilian\textsuperscript{1,*} \quad
  Stefano Cortinovis\textsuperscript{1,*} \quad
  François Caron\textsuperscript{1} \\
  \small
  \textsuperscript{1}Department of Statistics, University of Oxford \\
    \small
  \textsuperscript{*}Equal contribution. Order decided by the outcome of 10 judo fights. \\
    \small
\texttt{cortinovis@stats.ox.ac.uk}, \quad
  \texttt{kilian@stats.ox.ac.uk}, \quad
  \texttt{caron@stats.ox.ac.uk}
}
\date{}

\maketitle

\begin{abstract}
  Confidence sequences based on test martingales provide time-uniform uncertainty quantification for the mean of bounded IID observations without parametric distributional assumptions.
  Their practical efficiency, however, depends strongly on the choice of martingale updates, and many existing constructions do not exploit prior information about plausible data-generating distributions or mean values.
  We propose a Bayes-assisted framework that uses a Bayesian working predictive model to adaptively construct confidence sequences.
  For each candidate mean and time point, the predictive distribution selects, among valid one-step martingale factors, the update maximising predictive expected log-growth;
  validity is therefore preserved even when the prior or working model is misspecified.
  We prove that if the predictive distribution is Wasserstein-consistent, the resulting procedure is asymptotically log-optimal, matching the per-sample log-growth of an oracle procedure with access to the true distribution.
  We instantiate the framework using robust predictives based on Dirichlet-process mixtures and Bayesian exponentially tilted empirical likelihood.
  Experiments on synthetic data, sequential best-arm identification for LLM evaluation, and prediction-powered inference show that informative priors can substantially reduce confidence-sequence width and sampling effort while retaining anytime-valid coverage.
\end{abstract}

%///////Main 
% !TEX root = bacs-main.tex

\section{Introduction}
\label{sec:introduction}

Modern data analysis is often sequential: online experiments are monitored
during data collection, adaptive benchmarks allocate samples to promising
systems, and bandit algorithms act before a fixed terminal sample size is
known. Repeated inspection and data-dependent stopping can invalidate ordinary
fixed-time confidence intervals. The appropriate object is a \emph{confidence
sequence}, a sequence $(C_{\alpha,n})_{n\geq 1}$ satisfying
\begin{equation}
\label{eq:coverage}
\Ptrue\left(
  \mutrue \in C_{\alpha,n} \text{ for all } n\geq 1
\right)
\geq 1-\alpha,
\end{equation}
where $\mutrue$ is the mean of the data-generating distribution $\Ptrue$. We
study the nonparametric bounded-mean problem: $X_1,X_2,\ldots$ are i.i.d. in
$[0,1]$, $\mutrue=\E_{\Ptrue}[X_1]$, and no parametric form is assumed.

Confidence sequences can be constructed by sequentially testing each candidate
mean $\mu$. For each $\mu$, one builds a nonnegative evidence process that is
controlled if $\mu=\mutrue$ and tends to grow otherwise; values not yet
rejected form the confidence sequence. Test martingales and e-processes
justify this construction through Ville's inequality
\citep{Ville1939,Darling1967,Lai1976,Howard2021,WaudbySmith2024,RamdasWang2025}.
For bounded observations, this can be done with predictable linear factors
$1+\lambda_i(X_i-\mu)$, constrained to be nonnegative on $[0,1]$. The
constraints give validity; the choice of $\lambda_i$ determines efficiency.

The key practical question is how to choose these predictable coefficients.
GRAPA, the Growth Rate Adaptive to the Particular Alternative strategy of
\citet{WaudbySmith2024a}, uses the empirical distribution of past observations
as a forecast for the next one. We follow the same principle, but replace the
purely empirical forecast by a Bayesian or pseudo-Bayesian predictive
distribution, allowing prior information from previous experiments, expert
knowledge, simulations, or structure to guide the evidence process.

This differs from fully parametric Bayesian sequential testing. Under a
specified likelihood, mixture likelihood ratios or Bayes factors yield valid
sequential tests under that model
\citep{Kumon2008,Kumon2011,KaufmannKoolen2021,Cortinovis2025a}. Here, the
Bayesian model is used only to choose the next coefficient. The one-step factor
itself remains in a distribution-free bounded-mean martingale family. Hence
validity holds for any predictable working predictive distribution, even under
model or prior misspecification. Poor prior information may widen the
confidence sequence, but cannot by itself invalidate \eqref{eq:coverage}.

We propose \emph{Bayes-assisted confidence sequences} for bounded means. At
each time and candidate mean $\mu$, a working predictive distribution is
computed from the past and used to choose the next linear coefficient. This
separates \emph{validity}, which follows from martingales and Ville's
inequality, from \emph{efficiency}, which depends on predictive quality. We
show that if the working predictive learns the true distribution in
Wasserstein distance, then the method is asymptotically log-optimal: for each
false candidate mean, it attains the same limiting per-sample growth rate as
an oracle method that knew $\Ptrue$.

We instantiate the framework with predictives that use prior information early
while becoming data-driven over time. A parametric Bayesian predictive is
effective when its working model is accurate. For robustness, we also study a
mixture Dirichlet-process predictive, interpolating between a parametric
Bayesian predictive and the empirical distribution
\citep{Ferguson1973,Blackwell1973,Antoniak1974}; without the prior component,
it recovers the empirical plug-in idea underlying GRAPA. Finally, we develop
Bayesian exponentially tilted empirical likelihood and regularized variants,
which place prior information on the mean parameter of interest without specifying a full likelihood
\citep{Owen2001,Lazar2003,Schennach2005,Kim2024}.

\looseness=-1
Our contributions are as follows.
\begin{itemize}
\item We introduce a Bayes-assisted framework in which
Bayesian or pseudo-Bayesian predictives guide bounded-mean confidence
sequences.
\looseness=-1
\item We prove nonasymptotic time-uniform coverage for any
predictable working predictive distribution, including misspecified models and
inaccurate priors.
\looseness=-1
\item  We show asymptotic log-optimality when the predictive
distribution learns the true data-generating distribution.
\looseness=-1
\item  We propose asymptotically log-optimal constructions
based on mixture Dirichlet-process, BETEL, and RETEL predictives.
\looseness=-1
\item We demonstrate empirically that informative priors can
reduce confidence-sequence width and sampling effort, while misspecification
affects efficiency rather than validity.
\end{itemize}

The remainder of the paper is organised as follows. \Cref{sec:relatedwork}
discusses related work. \Cref{sec:background} reviews bounded-mean martingales
and confidence sequences by test inversion.
\Cref{sec:predictive_assisted_test_martingales} introduces the generic
predictive-assisted construction and efficiency theory.
\Cref{sec:predictive_constructions} develops the parametric, mixture
Dirichlet-process, BETEL, and RETEL predictives. \Cref{sec:experiments}
studies synthetic examples, best-arm identification for LLM evaluation, and
prediction-powered inference. The appendices contain proofs, implementation
details, prior specifications, and additional experiments.

% !TEX root = bacs-main.tex

\section{Related work}
\label{sec:relatedwork}

\textbf{Prior-assisted fixed-time inference.}
Fixed-time confidence intervals for bounded means are often obtained by inverting Hoeffding, Bennett--Bernstein, Bentkus, or empirical-Bernstein inequalities \citep{Hoeffding1963,Bennett1962,Bernstein1927,Bentkus2004,Audibert2007,Maurer2009}. A separate line asks how prior or side information can shorten frequentist intervals while preserving coverage, from Pratt's expected-length-optimal intervals to modern ``frequentist, assisted by Bayes'' (FAB) procedures \citep{Pratt1961,Pratt1963,Brown1995,Yu2018,Farchione2008,Kabaila2013,Kabaila2022,Hoff2019,Hoff2023,Cortinovis2024,Cortinovis2025b}. These methods are primarily parametric and fixed-sample-size; our target is a nonparametric, anytime-valid confidence sequence for bounded means.

\textbf{Confidence sequences, e-processes, and betting.}
Confidence sequences originate in \citet{Darling1967,Robbins1970a,Robbins1970,Lai1976}, building on martingale and sequential-testing foundations \citep{Ville1939,Wald1945}. Modern treatments use nonnegative supermartingales, e-values, and e-processes to obtain validity under optional stopping and continuation \citep{Howard2021,WaudbySmith2024,Shafer2001,Shafer2019,Ramdas2023,RamdasWang2025,Gruenwald2024}. For bounded means, betting martingales are especially relevant: \citet{WaudbySmith2024a} introduced betting CIs and CSs, with GRAPA choosing stakes by empirical log-growth optimization, while \citet{Orabona2023} obtained PRECiSE sequences via universal-portfolio regret bounds \citep{Cover1991,CoverOrdentlich1996}. Near-optimality and optimality of betting confidence sets have since been studied by \citet{ShekharRamdas2023,Clerico2025}, and closed-form empirical-Bernstein sequences give strong variance-adaptive baselines \citep{ChuggRamdas2025}. Our method follows the Kelly log-growth principle \citep{Kelly1956}, but replaces GRAPA's empirical predictive with a Bayesian working predictive; validity still follows solely from the bounded-mean martingale and Ville's inequality.

\textbf{Bayesian and empirical-likelihood predictives.}
Bayesian sequential tests, mixture likelihood ratios, and Bayes factors yield martingales when a probabilistic model is fully specified, and have been used to encode prior information in parametric or model-based sequential inference \citep{Kumon2008,Kumon2011,KaufmannKoolen2021,Cortinovis2025a}. The nonparametric Bayes-assisted confidence sequences of \citet{Kilian2025} are related, but their guarantees are asymptotic, whereas ours are nonasymptotic for bounded observations. We instantiate the betting rule using Dirichlet-process posterior predictives \citep{Ferguson1973,Blackwell1973,Antoniak1974} and Bayesian empirical or exponentially tilted empirical likelihood \citep{Owen2001,Lazar2003,Schennach2005,Chen2008,Emerson2009,Zhu2009,Liu2010,Kim2024}. These Bayesian objects are used only to select predictable stakes, not to assert Bayesian posterior coverage.

% !TEX root = bacs-main.tex

\section{Background on bounded-mean test martingales and confidence sequences}
\label{sec:background}

This section gives the general construction used throughout the paper, see e.g. \cite{RamdasWang2025}.  Let $X_1,X_2,\ldots$ be i.i.d. random variables in $[0,1]$ from $\Ptrue$ with mean $\mutrue\in(0,1)$. Let $\calF_n=\sigma(X_1,\ldots,X_n)$. For each candidate mean
$\mu\in(0,1)$, consider the composite null
hypothesis
\begin{equation}
\label{eq:mean_null}
H_\mu:\mutrue=\mu.
\end{equation}

\paragraph{Bounded-mean test martingales.} For $\mu\in(0,1)$ and a parameter $c\in(0,1)$, let
\begin{equation}
\label{eq:I_mu_c}
I_{\mu,c}:=\left[-\frac{c}{1-\mu},\frac{c}{\mu}\right].
\end{equation}
For any predictable (that is, $\calF_{i-1}$--measurable) sequence of coefficients
$\lambdahat_i(X_{1:i-1},\mu)\in I_{\mu,c}$, define
\begin{equation}
\label{Wn}
\Wn
=
\prod_{i=1}^n
\left(1+\lambdahat_i(X_{1:i-1},\mu)(X_i-\mu)\right),
\qquad W_0(\mu)=1 .
\end{equation}
The interval $I_{\mu,c}$ is chosen so that each one-step factor in
\eqref{Wn} is nonnegative for all possible observations $x\in[0,1]$. In the betting interpretation, $W_n$ is the bettor's wealth and
$\lambdahat_i(X_{1:i-1},\mu)$ is the predictable betting coefficient,
chosen before observing $X_i$, on the centered payoff $X_i-\mu$; such bet is fair under $H_\mu$. Such nonnegative multiplicative wealth processes are the
standard linear betting martingales used for bounded-mean confidence
sequences \citep{WaudbySmith2024a,Clerico2025}.

\begin{restatable}[Test-martingale property]{proposition}{testmarginaleproperty}
\label{prop:test_martingale_property}
Fix $\mu\in(0,1)$ and $c\in(0,1)$.  If
$\lambdahat_i(X_{1:i-1},\mu)$ is predictable and belongs to
$I_{\mu,c}$ for every $i\geq1$, then $(W_n(X_{1:n},\mu))_{n\geq0}$ is
nonnegative.  Moreover, under the null hypothesis $H_\mu$ in
\eqref{eq:mean_null}, $(W_n(X_{1:n},\mu))_{n\geq0}$ is a martingale with
initial value one.
\end{restatable}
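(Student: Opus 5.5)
The plan has three parts: a pointwise nonnegativity check for each one-step factor, then adaptedness and integrability of $W_n$, and finally the martingale identity under $H_\mu$ via the tower property.

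\textbf{Nonnegativity.} I will show that every one-step factor is nonnegative; the product is then nonnegative as well. Fix $x\in[0,1]$ and $\lambda\in I_{\mu,c}$. The map $\lambda\mapsto 1+\lambda(x-\mu)$ is affine, so on the interval $I_{\mu,c}$ its minimum is attained at an endpoint. I will split according to the sign of $x-\mu$.
\begin{itemize}
\item If $x\ge\mu$, the smallest value comes from $\lambda=-c/(1-\mu)$. It gives $1-c\,(x-\mu)/(1-\mu)\ge 1-c>0$, since $x-\mu\le 1-\mu$.
\item If $x<\mu$, the worst case is $\lambda=c/\mu$. It gives $1+c\,(x-\mu)/\mu\ge 1-c>0$, since $\mu-x\le\mu$.
\end{itemize}
So each factor is in fact at least $1-c$, and $W_n(X_{1:n},\mu)\ge0$ for all $n$. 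Since $x\in[0,1]$, each factor is also bounded above by $1+\max(c/\mu,c/(1-\mu))$.

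\textbf{Adaptedness and integrability.} By construction, $W_n$ is a product of $\calF_n$-measurable terms, because each $\lambdahat_i$ is $\calF_{i-1}$-measurable and $X_i$ is $\calF_i$-measurable. The upper bound on the factors gives $|W_n|\le (1+c\max(1/\mu,1/(1-\mu)))^n$. This is a deterministic bound, so $W_n$ is integrable.

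\textbf{Martingale identity under $H_\mu$.} Write $W_n=W_{n-1}\,(1+\lambdahat_n(X-\mu))$ with $X=X_n$. Because $W_{n-1}$ and $\lambdahat_n$ are $\calF_{n-1}$-measurable and bounded, they can be pulled out of the conditional expectation:
\[
\E[W_n\mid\calF_{n-1}]=W_{n-1}\bigl(1+\lambdahat_n\,\E[X_n-\mu\mid\calF_{n-1}]\bigr).
\]
Since the $X_i$ are i.i.d., $X_n$ is independent of $\calF_{n-1}$, so $\E[X_n-\mu\mid\calF_{n-1}]=\mutrue-\mu$. Under $H_\mu$ this equals $0$, which gives $\E[W_n\mid\calF_{n-1}]=W_{n-1}$. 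Together with $W_0=1$, this is the claim.

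No step is genuinely hard. The only point needing care is the endpoint analysis in the nonnegativity step, specifically matching the sign of $x-\mu$ with the correct endpoint of $I_{\mu,c}$.
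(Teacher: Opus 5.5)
Your proposal is correct and follows the same route as the paper's proof: the pointwise bound $1+\lambda(x-\mu)\ge 1-c$ gives nonnegativity, and pulling the predictable coefficient out of the conditional expectation with $\E[X_n-\mu\mid\calF_{n-1}]=0$ under $H_\mu$ gives the martingale identity. Your endpoint analysis and the deterministic integrability bound fill in details the paper leaves implicit.
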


\paragraph{Confidence sequence and anytime-valid coverage.} For each $\mu\in(0,1)$ and each $\alpha\in(0,1)$, the process $W_n(X_{1:n},\mu)$ can be thresholded at level $1/\alpha$ to obtain an anytime-valid sequential test for $H_\mu$ with type-I error $\alpha$. Inversion of this collection of tests yields the confidence sequence
\begin{equation}
\label{cs}
\CS
=
\left\{
\mu\in(0,1):\; \Wn \leq \frac{1}{\alpha}
\right\}
\subseteq [0,1].
\end{equation}

\begin{restatable}[Anytime-valid coverage]{proposition}{anytimevalidity}
\label{prop:anytime_validity}
For any $\alpha\in(0,1)$, $(\CS)_{n\geq 1}$ in \cref{cs} is a $1-\alpha$ confidence sequence for the mean parameter $\mu^*$. That is, it satisfies \cref{eq:coverage}.
\end{restatable}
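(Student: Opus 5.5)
The plan is to reduce coverage to a single application of Ville's inequality to the wealth process evaluated at the true mean. The confidence sequence fails to cover $\mutrue$ at some time exactly when the wealth at $\mutrue$ exceeds $1/\alpha$ at some time.

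First I record this set identity. By the definition \eqref{cs}, $\mutrue\notin\CS$ holds if and only if $W_n(X_{1:n},\mutrue)>1/\alpha$, using $\mutrue\in(0,1)$ as assumed. Hence
\begin{equation*}
\left\{\exists n\geq1:\ \mutrue\notin\CS\right\}
=
\left\{\exists n\geq 1:\ W_n(X_{1:n},\mutrue)>\tfrac{1}{\alpha}\right\}.
\end{equation*}
Taking complements gives $\Ptrue(\mutrue\in\CS\ \forall n)=1-\Ptrue(\sup_{n} W_n(X_{1:n},\mutrue)>1/\alpha)$.

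Second, I invoke \cref{prop:test_martingale_property} at $\mu=\mutrue$. Under $\Ptrue$ the null $H_{\mutrue}$ holds by definition, since the coefficients $\lambdahat_i(X_{1:i-1},\mutrue)$ are predictable and lie in $I_{\mutrue,c}$. So $(W_n(X_{1:n},\mutrue))_{n\ge0}$ is a nonnegative martingale with respect to $(\calF_n)$, with $W_0=1$.

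Third, I apply Ville's inequality for nonnegative supermartingales: $\Ptrue(\sup_{n\ge0}W_n\ge 1/\alpha)\le \alpha\,\E[W_0]=\alpha$. This yields \eqref{eq:coverage}. If a self-contained argument is preferred, I would prove Ville's inequality by optional stopping. Let $\tau=\inf\{n:W_n\ge1/\alpha\}$, and apply the stopped martingale at the bounded time $\tau\wedge N$. Then $\E[W_{\tau\wedge N}]=1$, so Markov's inequality gives $\Ptrue(\tau\le N)\le\alpha$. Letting $N\to\infty$ by monotone convergence finishes the argument.

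There is no real obstacle. The only point needing care is measurability: the coefficient sequence must be predictable specifically at $\mu=\mutrue$, a fixed deterministic value, and this is exactly what the hypothesis provides. The argument never requires joint uniformity in $\mu$, because only the single process at the true mean is controlled.
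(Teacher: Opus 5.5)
Your proposal is correct and matches the paper's proof: apply \cref{prop:test_martingale_property} at $\mu=\mutrue$ to get a nonnegative martingale with initial value one, apply Ville's inequality, and note that on the complement event $\mutrue\in\CS$ for every $n$. The paper simply cites Ville's inequality, so your optional-stopping derivation of it is a valid self-contained addition.
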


\paragraph{Oracle log-optimality.} The anytime-coverage result above holds for every predictable choice of the
coefficients.  The remaining question is how to choose them so that the
sequential tests have high power against false candidate means. For a fixed
candidate $\mu\in(0,1)$, define the expected log-evidence of a constant
coefficient $\lambda\in I_{\mu,c}$ under the true distribution $\Ptrue$ by
\begin{equation}
\label{eq:M_oracle}
M(\lambda,\mu)
:=
\E_{X_1\sim\Ptrue}
\left[
\log\left(1+\lambda(X_1-\mu)\right)
\right].
\end{equation}
If $\Ptrue$ were known, the oracle coefficient would be
\begin{equation}
\label{optimallambda}
\lambdaopt(\mu)
=
{\arg\max}_{\lambda\in I_{\mu,c}}
M(\lambda,\mu),
\end{equation}
and the corresponding oracle test martingale would be
\begin{equation}
\label{eq:oracle}
\Wnopt
=
\prod_{i=1}^n
\left(1+\lambdaopt(\mu)(X_i-\mu)\right).
\end{equation}
If $\mu=\mutrue$, then $\lambdaopt(\mu)=0$ and the oracle log-growth is zero. For a candidate mean $\mu\neq \mutrue$, $M(\lambdaopt(\mu),\mu)$ is the oracle
per-observation log-evidence against $H_\mu$ within the class
\eqref{Wn}.  Since $\Ptrue$ is unknown, $\lambdaopt(\mu)$ is unavailable.  We
call a data-dependent construction asymptotically log-optimal if it
matches this oracle log-evidence asymptotically, in the sense that
\begin{equation}
\label{eq:logoptimal}
\frac{1}{n}\E_{\Ptrue}\left[\log \Wn\right]
\to
M(\lambdaopt(\mu),\mu).
\end{equation}

\section{Predictive-Assisted Test Martingales for Bounded Means}
\label{sec:predictive_assisted_test_martingales}

We now describe our generic predictive-based approach to choose the predictable betting coefficient $(\widehat \lambda_i)$ in \cref{Wn} and state a simple sufficient condition for the associated test martingales to be asymptotically log-optimal.

Consider a sequence of $\calF_{n-1}$--measurable predictive distributions $\bbQ_{n\mid n-1}(\cdot\mid X_{1:n-1})$, $n\geq1$, on $[0,1]$.
 For a fixed candidate mean $\mu$, the $n$th predictive distribution is used to choose the next
betting coefficient by maximising the conditional expected logarithm of
the $n$th factor of $W_n$ in \cref{Wn}:
\begin{equation}
\label{eq:bayes_assisted_lambda}
\lambdahat_n(X_{1:n-1},\mu)
\in
{\arg\max}_{\lambda\in I_{\mu,c}}
\left\{
\E_{\widetilde X_n\sim \predictive}
\left[
\log\left(1+\lambda(\widetilde X_n-\mu)\right)
\right]
\right\}.
\end{equation}
When the maximiser is not unique, we fix an arbitrary measurable selection.  A
process of the form \eqref{Wn}, with coefficients chosen by
\eqref{eq:bayes_assisted_lambda}, is called a \emph{predictive-assisted test
martingale}. The role of $\bbQ_{n\mid n-1}$ in \eqref{eq:bayes_assisted_lambda} is purely
algorithmic: it guides the choice of a one-step test factor.  The predictive
model is not assumed to be correctly specified for the purpose of coverage: anytime coverage holds for any choice of predictive sequence.

The next theorem shows that the predictive-assisted construction is asymptotically log-optimal
whenever the sequence of predictive distributions consistently estimates $\Ptrue$ in
Wasserstein distance. Let $\Wassone$ denote the Wasserstein-1 distance on probability distributions
on $[0,1]$.

\begin{restatable}[Asymptotic log-optimality]{theorem}{thmlogoptimality}
\label{thm:logoptimality}
Let $(X_{n})_{n\geq 1}$ be i.i.d. in $[0,1]$ from a non-degenerate
distribution $\Ptrue$, and let $\calF_n=\sigma(X_1,\ldots,X_n)$. Let
$(\bbQ_{n\mid n-1}(\cdot\mid X_{1:n-1}))_{n\geq 1}$ be an $\calF_{n-1}$-measurable sequence of
predictive distributions on $[0,1]$ such that
\begin{align}
\Wassone(\bbQ_{n\mid n-1}(\cdot\mid X_{1:n-1}),\Ptrue)
\to 0
\qquad \text{in probability}.
\label{assump:weakconvpred}
\end{align}
Fix $\mu\in(0,1)$ and $c\in(0,1)$.  Define
\begin{align}
M(\lambda,\mu)
&:=
\E_{X_1\sim\Ptrue}
\left[
\log\left(1+\lambda(X_1-\mu)\right)
\right],
\label{eq:M_def_section3}
\\
M_n(\lambda,X_{1:n-1},\mu)
&:=
\E_{X'_n\sim{\bbQ}_{n\mid n-1}}
\left[
\log\left(1+\lambda(X'_{n}-\mu)\right)
\mid X_{1:n-1}
\right].
\label{eq:Mn_def_section3}
\end{align}
Let $\lambdahat_n(X_{1:n-1},\mu)$ be an $\calF_{n-1}$-measurable maximiser of
$\lambda\mapsto M_n(\lambda,X_{1:n-1},\mu)$ on $I_{\mu,c}$.  Then
$\lambda\mapsto M(\lambda,\mu)$ admits a unique maximiser
$\lambdaopt(\mu)$ over $I_{\mu,c}$, and
\begin{align}
\widehat\lambda_{n}(X_{1:n-1},\mu)
\rightarrow
\lambdaopt(\mu)
\qquad \text{in probability}.
\label{eq:lambda_as}
\end{align}
Moreover, for
$W_n(X_{1:n},\mu)=\prod_{i=1}^n
(1+\widehat\lambda_i(X_{1:i-1},\mu)(X_i-\mu))$,
\begin{align}
\frac{1}{n}\log W_n(X_{1:n},\mu)
&\to
M(\lambdaopt(\mu),\mu)
\qquad\text{in probability},
\label{eq:Wn_proba}
\\
\frac{1}{n}\E_{\Ptrue}\left[\log W_n(X_{1:n},\mu)\right]
&\to
M(\lambdaopt(\mu),\mu).
\label{eq:Wn_L1}
\end{align}
\end{restatable}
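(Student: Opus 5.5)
The argument has three stages: uniform convergence of the predictive objective, an argmax-consistency step, and a martingale law of large numbers for the log-wealth, finished by a domination argument to pass to expectations.

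\textbf{Step 1: uniform control of the objectives.} Since $c<1$, for every $\lambda\in I_{\mu,c}$ and $x\in[0,1]$ we have $1+\lambda(x-\mu)\ge 1-c>0$. Hence the map $x\mapsto\log(1+\lambda(x-\mu))$ is Lipschitz on $[0,1]$ with constant
\[
L:=\sup_{\lambda\in I_{\mu,c}}|\lambda|/(1-c)<\infty,
\]
uniformly in $\lambda$. Kantorovich--Rubinstein duality then gives
\[
\sup_{\lambda\in I_{\mu,c}}|M_n(\lambda,X_{1:n-1},\mu)-M(\lambda,\mu)|\le L\,\Wassone(\bbQ_{n\mid n-1},\Ptrue),
\]
which tends to zero in probability by \eqref{assump:weakconvpred}.

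\textbf{Step 2: uniqueness of $\lambdaopt(\mu)$ and consistency of $\lambdahat_n$.} The map $\lambda\mapsto M(\lambda,\mu)$ is continuous on the compact interval $I_{\mu,c}$, by dominated convergence with the bounded integrand. It is strictly concave because
\[
\partial_\lambda^2 M=-\E[(X_1-\mu)^2/(1+\lambda(X_1-\mu))^2]<0.
\]
This second derivative is nonzero because $\Ptrue$ is non-degenerate, so $X_1-\mu$ is not almost surely $0$. The maximiser $\lambdaopt(\mu)$ is therefore unique. Fix $\epsilon>0$ and set
\[
\eta=M(\lambdaopt,\mu)-\sup_{|\lambda-\lambdaopt|\ge\epsilon}M(\lambda,\mu),
\]
which is positive by compactness and uniqueness. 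Let $\Delta_n$ denote the uniform deviation from Step 1. If $|\lambdahat_n-\lambdaopt|\ge\epsilon$, then
\[
M(\lambdahat_n)\ge M_n(\lambdahat_n)-\Delta_n\ge M_n(\lambdaopt)-\Delta_n\ge M(\lambdaopt)-2\Delta_n,
\]
which forces $2\Delta_n\ge\eta$. That event has vanishing probability, which proves \eqref{eq:lambda_as}.

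\textbf{Step 3: law of large numbers for the log-wealth.} Write $\frac1n\log W_n=A_n+B_n+C_n$, where
\begin{align*}
A_n&=\frac1n\sum_{i\le n}\bigl[\log(1+\lambdahat_i(X_i-\mu))-M(\lambdahat_i,\mu)\bigr],\\
B_n&=\frac1n\sum_{i\le n}\bigl[M(\lambdahat_i,\mu)-M(\lambdaopt,\mu)\bigr],\\
C_n&=M(\lambdaopt,\mu).
\end{align*}
\emph{Term $A_n$.} Because $\lambdahat_i$ is $\calF_{i-1}$-measurable and $X_i$ is independent of $\calF_{i-1}$, we have
\[
\E[\log(1+\lambdahat_i(X_i-\mu))\mid\calF_{i-1}]=M(\lambdahat_i,\mu).
\]
So $nA_n$ is a martingale whose increments are bounded by $K:=\max(|\log(1-c)|,\log(1+c/\min(\mu,1-\mu)))$, a constant. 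Orthogonality of increments gives $\E[A_n^2]\le 4K^2/n\to0$.

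\emph{Term $B_n$.} $M$ is continuous and bounded by $K$ on $I_{\mu,c}$, so $b_i:=|M(\lambdahat_i)-M(\lambdaopt)|$ is bounded by $2K$ and tends to zero in probability by Step 2. Bounded convergence gives $\E b_i\to0$. Then $\E|B_n|\le\frac1n\sum_i\E b_i\to0$ by Cesàro averaging.

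Together these give $\E|\frac1n\log W_n-M(\lambdaopt,\mu)|\to0$. This $L^1$ convergence yields both \eqref{eq:Wn_proba} and \eqref{eq:Wn_L1}.

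\textbf{Main obstacle.} The delicate step is Step 2. The hypothesis gives only convergence in probability, not almost sure convergence, so the Ces\`aro average in $B_n$ cannot be handled pathwise. The approach above avoids this by passing to expectations, which is possible because every term is bounded by the constant $K$; this boundedness comes from $c<1$ keeping each factor at least $1-c$. A secondary point is that $\lambdahat_n$ must be a measurable selection, so that the martingale-difference structure in $A_n$ is valid; the statement assumes this.
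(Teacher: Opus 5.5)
Your proposal is correct and follows the paper's proof essentially step for step. It uses the same Kantorovich--Rubinstein uniform bound with Lipschitz constant $c/\{(1-c)\min(\mu,1-\mu)\}$, the same separation argument giving $\widehat\lambda_n\to\lambdaopt(\mu)$, and the same split of $\frac1n\log W_n$ into a bounded martingale-difference average plus a Ces\`aro average of $M(\widehat\lambda_i,\mu)$ handled in $L^1$. The only real difference is the martingale term: the paper controls it with the martingale strong law of large numbers, which it also reuses for an almost-sure variant, whereas your orthogonal-increments bound $\E[A_n^2]\le 4K^2/n$ is more elementary and gives $L^1$ convergence of $\frac1n\log W_n$ directly, so \eqref{eq:Wn_proba} and \eqref{eq:Wn_L1} follow at once.
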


\Cref{prop:anytime_validity} gives anytime-valid
coverage for any sequence of predictive distributions.  In contrast,
\Cref{thm:logoptimality} states that, when the predictive distributions
converge to $\Ptrue$, the induced sequential tests attain the same first-order
log-evidence as the oracle construction that knows $\Ptrue$.

The following finite-sample comparison makes explicit how the loss relative to
the oracle is controlled by the predictive error.

\begin{theorem}[Predictive error and expected log-evidence]
\label{thm:finitn}
With the notation of \Cref{thm:logoptimality},
\begin{equation}
M(\lambdaopt(\mu),\mu)
-
\frac{1}{n}\E_{\Ptrue}\left[\log W_n(X_{1:n},\mu)\right]
\leq
2L_{\mu,c}
\times
\frac{1}{n}
\sum_{i=1}^n
\E_{\Ptrue}\left[
\Wassone\left(
\bbQ_{i\mid i-1}(\cdot\mid X_{1:i-1}),\Ptrue
\right)
\right],
\end{equation}
where $L_{\mu,c}=\frac{c}{(1-c)\min(\mu,1-\mu)}<\infty .$
\end{theorem}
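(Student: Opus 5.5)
We follow the standard ``estimated-versus-true objective'' argument for plug-in maximisers, applied at each step.

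The first step is to rewrite the expected log-wealth. Since $\lambdahat_i$ is $\calF_{i-1}$-measurable and $X_i$ is independent of $\calF_{i-1}$ with law $\Ptrue$, the tower property gives
\begin{equation*}
\E_{\Ptrue}\left[\log W_n(X_{1:n},\mu)\right]
=\sum_{i=1}^n \E_{\Ptrue}\left[ M(\lambdahat_i(X_{1:i-1},\mu),\mu)\right].
\end{equation*}
This quantity is finite because each factor $1+\lambda(x-\mu)$ lies in $[1-c,1+c]$ for $\lambda\in I_{\mu,c}$ and $x\in[0,1]$. It therefore suffices to prove the pointwise bound
\begin{equation*}
M(\lambdaopt(\mu),\mu)-M(\lambdahat_i,\mu)\leq 2L_{\mu,c}\,\Wassone(\bbQ_{i\mid i-1}(\cdot\mid X_{1:i-1}),\Ptrue)
\end{equation*}
for every realisation of $X_{1:i-1}$. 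Averaging this over $i$ and taking expectations then yields the claim.

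For the pointwise bound, use the three-term decomposition
\begin{equation*}
M(\lambdaopt,\mu)-M(\lambdahat_i,\mu)=\bigl[M(\lambdaopt,\mu)-M_i(\lambdaopt)\bigr]+\bigl[M_i(\lambdaopt)-M_i(\lambdahat_i)\bigr]+\bigl[M_i(\lambdahat_i)-M(\lambdahat_i,\mu)\bigr],
\end{equation*}
where $M_i(\lambda)=M_i(\lambda,X_{1:i-1},\mu)$. The middle bracket is $\leq 0$, because $\lambdahat_i$ maximises $M_i$ over $I_{\mu,c}$ and $\lambdaopt\in I_{\mu,c}$. Each outer bracket has the form $|\E_{\Ptrue}f_\lambda-\E_{\bbQ_{i\mid i-1}}f_\lambda|$ with $f_\lambda(x)=\log(1+\lambda(x-\mu))$. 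We then apply Kantorovich--Rubinstein duality: $|\E_P f-\E_Q f|\leq \mathrm{Lip}(f)\,\Wassone(P,Q)$. So each bracket is at most $\sup_{\lambda\in I_{\mu,c}}\mathrm{Lip}(f_\lambda)\,\Wassone(\bbQ_{i\mid i-1},\Ptrue)$, and the two outer brackets together give the factor $2$. This argument needs no uniqueness of $\lambdaopt$, only its existence. Existence follows from continuity of $M(\cdot,\mu)$ on the compact interval $I_{\mu,c}$, which is also guaranteed by \Cref{thm:logoptimality}.

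The only real computation is the Lipschitz constant, which I expect to be the main place where care is needed, to match the stated $L_{\mu,c}$. We have $f_\lambda'(x)=\lambda/(1+\lambda(x-\mu))$. On $I_{\mu,c}$ the denominator is at least $1-c$. To see this, take $\lambda>0$: then $\lambda(x-\mu)\geq-\lambda\mu\geq -c$. The case $\lambda<0$ is symmetric, giving $\lambda(x-\mu)\geq -|\lambda|(1-\mu)\geq -c$. The numerator satisfies $|\lambda|\leq c/\min(\mu,1-\mu)$. Hence $|f_\lambda'|\leq c/((1-c)\min(\mu,1-\mu))=L_{\mu,c}$ uniformly in $\lambda\in I_{\mu,c}$ and $x\in[0,1]$. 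The final point to check is measurability of $X_{1:i-1}\mapsto\Wassone(\bbQ_{i\mid i-1},\Ptrue)$, so that its expectation is defined. If this fails, the expectation can be read as an outer expectation, and the inequality survives unchanged.
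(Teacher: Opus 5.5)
Your proposal is correct and follows the paper's proof essentially step for step: the tower-property identity $\E[\log W_n]=\sum_{i}\E[M(\widehat\lambda_i)]$, the three-term decomposition with a nonpositive middle term, and the Kantorovich--Rubinstein bound with the uniform Lipschitz constant $L_{\mu,c}$. One harmless slip is that the factors $1+\lambda(x-\mu)$ lie in $[1-c,\,1+c\max\{(1-\mu)/\mu,\mu/(1-\mu)\}]$ rather than in $[1-c,1+c]$ (take $\lambda=c/\mu$, $x=1$); since only boundedness is used, the argument is unaffected.
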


The proposed method can therefore use any sequence of predictive distributions
satisfying \eqref{assump:weakconvpred}.  In the next section, we describe
constructions of $\predictive$ based on Bayesian nonparametric and
empirical-likelihood ideas.

% !TEX root = bacs-main.tex

\section{Bayesian and pseudo-Bayesian predictive constructions}
\label{sec:predictive_constructions}

\Cref{prop:anytime_validity} implies that anytime coverage is
obtained for any predictable sequence of predictive distributions
$\bbQ_{n\mid n-1}(\cdot\mid X_{1:n-1})$ on $[0,1]$.  The predictive
model is used only to choose the one-step betting coefficient in
\Cref{eq:bayes_assisted_lambda}.  \Cref{thm:logoptimality}
then shows that the same construction is
asymptotically log-optimal whenever \cref{assump:weakconvpred} holds.

This section gives several choices of $\bbQ_{n\mid n-1}$. A first natural approach is to take a fully Bayesian approach to this problem, by specifying a parametric or nonparametric prior distribution on the unknown $\Ptrue$. $\bbQ_{n\mid n-1}$ is then the posterior predictive distribution under this Bayesian model. A parametric version is presented in \cref{subsec:parametric_predictives}, and a log-optimal nonparametric approach in \cref{subsec:dirichlet_process}.

In many applications, prior information may be available for the mean parameter of interest $\mu^*$, but the elicitation of a prior for the unknown distribution $\Ptrue$ is challenging. To this end, we consider a Bayesian empirical likelihood in \cref{subsec:betel}, which bypasses the need to fully specify a Bayesian model. This approach is also shown to be asymptotically log-optimal.

\subsection{Parametric Bayesian predictive}
\label{subsec:parametric_predictives}

The most direct construction starts from a parametric working model.  Let
$G_0(\cdot;\phi)$ be a probability distribution on $[0,1]$ indexed by
$\phi\in\Phi$, and let $\pi(d\phi)$ be a prior on $\Phi$.  The posterior
predictive distribution is
\begin{equation}
\label{eq:param}
\bbQ_{n+1\mid n}(dx\mid X_{1:n})
=
\int_\Phi G_0(dx;\phi)\,\pi(d\phi\mid X_{1:n}),
\end{equation}
with $\bbQ_{1\mid0}(dx)=\int_\Phi G_0(dx;\phi)\,\pi(d\phi)$. While such approach has anytime coverage, and can provide shorter confidence sequences for small sample sizes, it need not be asymptotically log-optimal if misspecified. This motivates the use of a Bayesian nonparametric approach, as introduced in the next section.

\subsection{Mixture Dirichlet process-style predictive}
\label{subsec:dirichlet_process}

Let $\bbP_n=\frac{1}{n}\sum_{i=1}^n\delta_{X_i}$ denote the empirical distribution, and
\[
\bbH_{n+1\mid n}(dx\mid X_{1:n})
=
\int_\Phi G_0(dx;\phi)\,\pi(d\phi\mid X_{1:n})
\]
be a parametric posterior predictive distribution of the form
\eqref{eq:param}.  For $\kappa>0$, define
\begin{equation}
\label{eq:dirichlet}
\bbQ_{n+1\mid n}(dx\mid x_{1:n})
=
\frac{\kappa}{\kappa+n}\,
\bbH_{n+1\mid n}(dx\mid X_{1:n})
+
\frac{n}{\kappa+n}\,\bbP_n(dx),
\qquad n\geq1.
\end{equation}
The form in \eqref{eq:dirichlet} is inspired by the Pólya-urn posterior
predictive of a Dirichlet process \citep{Ferguson1973,Blackwell1973}.
When $G_0$ is non-atomic and all observations are distinct, it corresponds to the posterior predictive in Antoniak's mixture of Dirichlet processes~\cite{Antoniak1974}. We call \eqref{eq:dirichlet} a mixture-DP-style (MDP) predictive. It can also be viewed as a robust interpolation between the parametric
predictive and the empirical distribution. For $\kappa=0$, one recovers GRAPA~\citet{WaudbySmith2024a}.

The parameter $\kappa$ controls how quickly the construction moves from the
prior-driven component to the data-driven component.  Large $\kappa$ gives more
weight to the parametric predictive at early sample sizes, while small
$\kappa$ moves quickly toward the empirical distribution.  For every fixed
$\kappa$, the empirical component has weight tending to one.  Thus the
construction can exploit informative prior information early, while recovering
predictive consistency even if the parametric likelihood is misspecified.
This type of prior-data interpolation is closely related to robust Bayesian
nonparametric and general Bayesian ideas for mitigating model misspecification
\citep{Walker2013,Bissiri2016,Lyddon2018}.

\begin{proposition}[Predictive consistency of the MDP predictive]
\label{prop:wass_rate}
Suppose $X_1,X_2,\ldots$ are i.i.d. from a distribution $\Ptrue$ supported on
$[0,1]$.  Let $\bbQ_{n+1\mid n}$ be defined by \eqref{eq:dirichlet}, with the
parametric component $\bbH_{n+1\mid n}$ also supported on $[0,1]$.  Then
\begin{equation}
\label{eq:wass_rate_dirichlet}
\E_{\Ptrue}\!\left[
\Wassone\!\left(
\bbQ_{n+1\mid n}(\cdot\mid X_{1:n}),\Ptrue
\right)
\right]
=
O(n^{-1/2}).
\end{equation}
Consequently, $\Wassone\!\left(
\bbQ_{n+1\mid n}(\cdot\mid X_{1:n}),\Ptrue
\right)
\to 0$ in $\Ptrue$-probability.

\end{proposition}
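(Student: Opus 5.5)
The plan is to combine the convexity of $\Wassone$ under mixtures with the standard one-dimensional bound for the empirical measure. Writing $w_n=\kappa/(\kappa+n)$, the predictive in \eqref{eq:dirichlet} is the mixture $\bbQ_{n+1\mid n}=w_n\bbH_{n+1\mid n}+(1-w_n)\bbP_n$. Since $\Wassone$ is jointly convex (for instance via the Kantorovich--Rubinstein dual, a supremum of linear functionals over $1$-Lipschitz $f$), and $\Ptrue=w_n\Ptrue+(1-w_n)\Ptrue$, we get
\begin{equation*}
\Wassone(\bbQ_{n+1\mid n},\Ptrue)\le w_n\,\Wassone(\bbH_{n+1\mid n},\Ptrue)+(1-w_n)\,\Wassone(\bbP_n,\Ptrue).
\end{equation*}
Both $\bbH_{n+1\mid n}$ and $\Ptrue$ are supported on $[0,1]$, so any coupling gives $\Wassone(\bbH_{n+1\mid n},\Ptrue)\le 1$. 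Hence the first term is at most $\kappa/(\kappa+n)=O(n^{-1})$ deterministically. This requires no consistency of the parametric component, which is the key robustness point.

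For the second term, I will use the one-dimensional identity $\Wassone(\bbP_n,\Ptrue)=\int_0^1|F_n(t)-F(t)|\,dt$, where $F_n$ and $F$ are the empirical and true CDFs. By Fubini and Jensen (Cauchy--Schwarz),
\begin{equation*}
\E_{\Ptrue}\bigl[\Wassone(\bbP_n,\Ptrue)\bigr]=\int_0^1\E|F_n(t)-F(t)|\,dt\le\int_0^1\sqrt{F(t)(1-F(t))/n}\,dt\le\frac{1}{2\sqrt n},
\end{equation*}
because $nF_n(t)\sim\mathrm{Binomial}(n,F(t))$. Combining the two bounds gives $\E[\Wassone(\bbQ_{n+1\mid n},\Ptrue)]\le \kappa/(\kappa+n)+1/(2\sqrt n)=O(n^{-1/2})$. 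The only mild care needed is measurability of $\Wassone(\bbH_{n+1\mid n},\Ptrue)$, which is immaterial because it is bounded by $1$ pointwise.

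The convergence in probability then follows from Markov's inequality: $\Ptrue(\Wassone>\varepsilon)\le \E[\Wassone]/\varepsilon\to0$. I do not expect a real obstacle. The step needing the most attention is justifying the convexity of $\Wassone$ in its first argument for mixtures. The dual representation handles this directly: for each $1$-Lipschitz $f$, $\int f\,d(\bbQ-\Ptrue)=w_n\int f\,d(\bbH-\Ptrue)+(1-w_n)\int f\,d(\bbP_n-\Ptrue)$, and taking the supremum gives the inequality.
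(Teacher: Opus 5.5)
Your proof is correct and follows the paper's argument: convexity of $\Wassone$ under the mixture, the bound $\Wassone(\bbH_{n+1\mid n},\Ptrue)\le 1$ that makes the prior term at most $\kappa/(\kappa+n)$, the $O(n^{-1/2})$ rate for the empirical measure, and Markov's inequality. The only difference is that you derive the empirical rate $\E[\Wassone(\bbP_n,\Ptrue)]\le 1/(2\sqrt n)$ explicitly, via the one-dimensional CDF formula and the binomial variance, whereas the paper simply asserts it as standard for bounded support.
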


By \Cref{thm:logoptimality}, the Bayes-assisted confidence sequence based on
the MDP predictive is therefore asymptotically log-optimal.

\subsection{Bayesian empirical likelihood predictive}
\label{subsec:betel}

The MDP predictive in \Cref{eq:dirichlet} is robust to likelihood
misspecification, but still requires a prior model for the full
data-generating distribution.  When prior information is available only for
the mean, we instead use a Bayesian empirical likelihood: the prior is placed
on \(\mu\), while the likelihood is estimated from the moment restriction
\(\E_{\Ptrue}[X-\mu]=0\).  We consider Bayesian exponentially tilted empirical
likelihood (BETEL) \citep{Lazar2003,Schennach2005} and a regularized version
(RETEL), related to adjusted and regularized empirical likelihood methods
\citep{Chen2008,Emerson2009,Liu2010,Zhu2009,Kim2024}.

Let \(\pi\) be a prior on \([0,1]\).  For each \(\mu\in(0,1)\), let
\(\bbH_\mu\) be a regularizing distribution on \([0,1]\), and let
\(\tau_n\geq0\) with \(\tau_n=o(n)\).  Recall that $\bbP_n$ is the empirical distribution, and define
$
    \bbP_{n,\mu}
    =
    \frac{n}{n+\tau_n}\bbP_n
    +
    \frac{\tau_n}{n+\tau_n}\bbH_\mu.
$
For a candidate mean \(\mu\), consider as an empirical likelihood distribution
\begin{equation}
\label{eq:retel-projection}
    \widetilde{\bbP}_{n,\mu}
    =
    \arg\min_{P\ll \bbP_{n,\mu}}
    \left\{
        \KL(P\,\|\,\bbP_{n,\mu})
        :
        \int (z-\mu)\,dP(z)=0
    \right\}.
\end{equation}
Thus \(\widetilde{\bbP}_{n,\mu}\) is the closest tilt of the regularized
empirical measure whose mean is exactly \(\mu\).  Writing
$
    \psi_{n,\mu}(\gamma)
    =
    \log\int \exp\{\gamma(z-\mu)\}\,d\bbP_{n,\mu}(z),
$
the solution is
\begin{equation}
%\label{eq:retel-density}
    \frac{d\widetilde{\bbP}_{n,\mu}}{d\bbP_{n,\mu}}(z)
    =
    \exp\{\gamma_{n,\mu}(z-\mu)-\psi_{n,\mu}(\gamma_{n,\mu})\},
    \qquad
    \psi_{n,\mu}'(\gamma_{n,\mu})=0 .
\label{eq:retel-gamma}
\end{equation}
The case \(\tau_n=0\) gives BETEL, with likelihood set to zero outside the interior of the
empirical convex hull.  If \(\tau_n>0\) and
\(\operatorname{conv}(\operatorname{supp}\bbH_\mu)=[0,1]\), the construction is
defined for every \(\mu\in(0,1)\). Let
\[
    M_{\bbH_\mu}(\gamma)=\int e^{\gamma(z-\mu)}\,d\bbH_\mu(z),
    \qquad
    Z_{n,\mu}
    =
    \sum_{j=1}^n e^{\gamma_{n,\mu}(X_j-\mu)}
    +
    \tau_n M_{\bbH_\mu}(\gamma_{n,\mu}),
\]
and define the empirical-likelihood weights $w_{n,i}(\mu) =
    e^{\gamma_{n,\mu}(X_i-\mu)}/Z_{n,\mu},$
    $i=1,\ldots,n$.
For the two-point regularizer used below, the corresponding
empirical-likelihood contribution is, up to a \(\mu\)-free factor,
$
    L_n(\mu)=\prod_{i=1}^n w_{n,i}(\mu).
$
The pseudo-posterior and pseudo-predictive are
\begin{equation}
    \label{eq:retel-posterior_and_predictive}
    \Pi_n(d\mu)
    =
    \frac{L_n(\mu)\pi(d\mu)}
    {\int_0^1 L_n(u)\pi(du)},
    \qquad
%\label{eq:retel-predictive}
    \bbQ_{n+1\mid n}(A\mid X_{1:n})
    =
    \int_0^1 \widetilde{\bbP}_{n,\mu}(A)\,\Pi_n(d\mu).
\end{equation}
The next proposition verifies the Wasserstein consistency condition needed
for \Cref{thm:logoptimality}; its proof is given in
\Cref{app:proof-retel}.

\begin{restatable}[Consistency of the RETEL predictive]{proposition}{propretel}
\label{prop:retel}
Assume that \(X_1,X_2,\ldots\) are i.i.d. from a non-degenerate distribution
\(\Ptrue\) supported on \([0,1]\), with mean \(\mutrue\in(0,1)\) and variance
\(\sigma^2>0\).  Assume that \(\pi\) is absolutely continuous on \([0,1]\),
with density bounded away from zero in a neighbourhood of \(\mutrue\), that
\(\tau_n=o(n)\), and that each \(\bbH_\mu\) is supported on \([0,1]\) with
\(\operatorname{conv}(\operatorname{supp}\bbH_\mu)=[0,1]\).  Then, for every
\(\epsilon>0\),
\begin{equation}
    \Pi_n\bigl(|\mu-\mutrue|>\epsilon\bigr)\to0,\quad\text{and}\quad    \Pi_n\bigl(|\gamma_{n,\mu}|>\epsilon\bigr)\to0,
    \label{eq:retel-consistency}
\end{equation}
in probability.  Moreover, in probability, $ \Wassone\bigl(\bbQ_{n+1\mid n}(\cdot\mid X_{1:n}),\Ptrue\bigr)\to0.$
\end{restatable}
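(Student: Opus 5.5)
The plan is to show that the pseudo-posterior $\Pi_n$ concentrates on a shrinking neighbourhood of $\mutrue$, and that there the tilted measures $\widetilde{\bbP}_{n,\mu}$ are uniformly close to $\bbP_n$. The Wasserstein statement then follows by averaging over $\Pi_n$ and using convexity of $\Wassone$ in its first argument:
\[
\Wassone(\bbQ_{n+1\mid n},\Ptrue)\le\int\Wassone(\widetilde{\bbP}_{n,\mu},\Ptrue)\,\Pi_n(d\mu).
\]
I would split this integral into the set $\{|\mu-\mutrue|\le\delta\}$ and its complement. On the complement, $\Wassone\le1$ on $[0,1]$, so its contribution is at most $\Pi_n(|\mu-\mutrue|>\delta)$.

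\textbf{Local behaviour of the tilt.} First I control $\gamma_{n,\mu}$ uniformly in $\mu$ near $\mutrue$. Since $\tau_n=o(n)$ and $\Wassone(\bbP_n,\Ptrue)\to0$ almost surely, $\bbP_{n,\mu}\to\Ptrue$ in $\Wassone$ uniformly in $\mu$. Hence $\psi_{n,\mu}$ and its derivatives converge uniformly on compacts in $(\gamma,\mu)$ to
\[
\psi_\mu(\gamma)=\log\E_{\Ptrue} e^{\gamma(X-\mu)},
\]
which is strictly convex with $\psi_\mu''\ge c_0>0$ near $(0,\mutrue)$ because $\sigma^2>0$. The equation $\psi_\mu'(\gamma)=0$ has the solution $\gamma=0$ at $\mu=\mutrue$. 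An implicit-function / strict-convexity argument then gives
\[
\sup_{|\mu-\mutrue|\le\delta}|\gamma_{n,\mu}|\le C\delta+o_P(1).
\]
On this set the density $d\widetilde{\bbP}_{n,\mu}/d\bbP_{n,\mu}=e^{\gamma(z-\mu)-\psi}$ is uniformly within $O(|\gamma|)$ of $1$. This yields $\Wassone(\widetilde{\bbP}_{n,\mu},\bbP_{n,\mu})\le C'|\gamma_{n,\mu}|$, because total variation bounds $\Wassone$ on $[0,1]$. Adding $\Wassone(\bbP_{n,\mu},\bbP_n)\le\tau_n/(n+\tau_n)$ and $\Wassone(\bbP_n,\Ptrue)\to0$ shows the near part is $O(\delta)+o_P(1)$. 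Letting $\delta\downarrow0$ gives the claim, once posterior concentration is known. The statement $\Pi_n(|\gamma_{n,\mu}|>\epsilon)\to0$ also follows from the uniform bound on $\gamma_{n,\mu}$ together with concentration.

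\textbf{Posterior concentration (main obstacle).} Write
\[
\log L_n(\mu)=\sum_i\gamma_{n,\mu}(X_i-\mu)-n\log Z_{n,\mu}.
\]
Using $Z_{n,\mu}=(n+\tau_n)e^{\psi_{n,\mu}(\gamma_{n,\mu})}$ together with the first-order condition, one gets
\[
\tfrac1n\log L_n(\mu)=-\log(n+\tau_n)-\ell_n(\mu)+o(1),
\qquad
\ell_n(\mu):=-\min_\gamma\psi_{n,\mu}(\gamma)+\text{(a }\tau_n\text{ correction of order }\tau_n|\gamma|/n).
\]
The quantity $-\inf_\gamma\psi_{n,\mu}(\gamma)=\KL$-projection value is the ETEL profile. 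It converges uniformly on compact subsets of $(0,1)$ to
\[
\ell(\mu)=-\inf_\gamma\log\E e^{\gamma(X-\mu)}\ge0,
\]
which vanishes only at $\mutrue$, is continuous, and is bounded below by $\eta(\epsilon)>0$ off $\{|\mu-\mutrue|<\epsilon\}$. Thus the likelihood ratio $L_n(\mu)/L_n(\mutrue)$ is at most $e^{-n\eta/2}$ there. The prior-mass condition gives a denominator of at least $\pi(\{|\mu-\mutrue|<r\})\,e^{-n\eta/4}$ for small $r$, using continuity of $\ell$ near $\mutrue$. Together these give $\Pi_n(|\mu-\mutrue|>\epsilon)\to0$ in probability.

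The delicate points are the boundary regions $\mu$ near $0$ or $1$ and $\mu$ outside the convex hull of the data. There I will use the regularizer: since $\operatorname{conv}(\operatorname{supp}\bbH_\mu)=[0,1]$, the tilt exists. The profile $-\inf_\gamma\psi_{n,\mu}$ stays large when $\mu$ lies far from the bulk of $\bbP_n$, because the $\tau_n/(n+\tau_n)$ mass cannot lower the infimum by more than $\log(1+\tau_n/n)$-type amounts. So a uniform lower bound of order $\eta$ still holds for $|\mu-\mutrue|>\epsilon$. Making this uniformity in $\mu\in(0,1)$ rigorous, including near the endpoints where $\gamma_{n,\mu}$ may diverge, is the step I expect to require the most care.
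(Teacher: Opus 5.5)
\textbf{There is a genuine gap in the posterior-concentration step.} Your Wasserstein step and your local control of $\gamma_{n,\mu}$ are fine and essentially match the paper. The paper centres at $\bar X_n$ rather than $\mutrue$ and bounds $\Wassone(\widetilde\bbP_{n,\mu},\bbP_n)$ through the weights.

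The problem is your identity for $\tfrac1n\log L_n(\mu)$. The exact identity is
\[
\tfrac1n\log L_n(\mu)=\gamma_{n,\mu}(\bar X_n-\mu)-\log(n+\tau_n)-\psi_{n,\mu}(\gamma_{n,\mu}).
\]
The first-order condition $\psi_{n,\mu}'(\gamma_{n,\mu})=0$ only says that the \emph{tilted} measure has mean $\mu$. It does not remove the term $\gamma_{n,\mu}(\bar X_n-\mu)$, which is of order one whenever $\mu$ is bounded away from $\bar X_n$. Your formula also has the sign of the $\psi_{n,\mu}(\gamma_{n,\mu})$ term reversed.

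For $\tau_n=0$ the correct exponent is $\KL(\bbP_n\,\|\,\widetilde\bbP_{n,\mu})$. Your $-\min_\gamma\psi_{n,\mu}=\KL(\widetilde\bbP_{n,\mu}\,\|\,\bbP_n)$ is instead the exponential-tilting criterion, whose limit $\ell$ is the Cram\'er rate function. The two agree only to second order near $\bar X_n$. So your $\ell$ is not the rate governing $\Pi_n$, and the bound $L_n(\mu)/L_n(\mutrue)\le e^{-n\eta/2}$ off an $\epsilon$-neighbourhood is not established.

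You also leave open the uniform-in-$\mu$ lower bound in the regions where $\gamma_{n,\mu}$ blows up: near the edge of the data hull (BETEL) and near $0,1$ (RETEL). That uniformity is the heart of the proposition.

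\textbf{The missing idea is that no asymptotics of the profile are needed.} Set $a=\sum_i w_{n,i}(\mu)$, $p_i=w_{n,i}(\mu)/a$ and $u_i=1/n$. Then, exactly,
\[
\log\prod_{i=1}^n n w_{n,i}(\mu)=n\log a-n\KL(u\,\|\,p).
\]
The mean constraint on $\widetilde\bbP_{n,\mu}$, with the regularizer supported on $[0,1]$, gives $|\mu-\sum_ip_iX_i|\le 1-a$. Since $X_i\in[0,1]$, also $|\sum_ip_iX_i-\bar X_n|\le\mathrm{TV}(p,u)$.

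Hence if $|\mu-\bar X_n|\ge t$, either $1-a\ge t/2$, or Pinsker gives $\KL(u\,\|\,p)\ge t^2/2$. This yields the deterministic bound $\sup_{|\mu-\bar X_n|\ge t}\prod_i nw_{n,i}(\mu)\le e^{-nc(t)}$, with $c(t)=\min\{-\log(1-t/2),t^2/2\}$. It holds uniformly in $\mu$, boundary regions included, with no control of $\gamma_{n,\mu}$ there.

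For the denominator your local tilt bound suffices. On a small neighbourhood of $\bar X_n$ one has $nw_{n,i}(\mu)\ge\frac{n}{n+\tau_n}e^{-2|\gamma_{n,\mu}|}$ with $|\gamma_{n,\mu}|$ small. So the normaliser is at least $e^{-n\,c(\epsilon/2)/2}$ times a prior mass bounded away from zero. The paper actually gets a subexponential lower bound using a $1/n$-neighbourhood. With these two ingredients in place of your profile-convergence argument, the rest of your proof goes through.
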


By \Cref{thm:logoptimality}, the Bayes-assisted confidence sequence based on
the RETEL predictive is asymptotically log-optimal.  In the experiments we use
two cases: unregularized BETEL, \(\tau_n=0\), and the two-point RETEL
regularizer $\tau_n=1$, $\bbH_\mu=\bbH=\frac12(\delta_0+\delta_1)$. The latter acts as two endpoint half-pseudo-observations, stabilizing the empirical
likelihood while remaining asymptotically data-dominated.  Computational
details are given in \Cref{app:betel-computation}.

% !TEX root = bacs-main.tex

\section{Experiments}
\label{sec:experiments}

We evaluate our Bayes-assisted CS methods on synthetic data and two real-world applications, showing that informative working predictives can shorten confidence sequences and reduce stopping times while preserving anytime validity under misspecification.
The Bayes-assisted methods use the robust predictives from \cref{sec:predictive_constructions}: the mixture Dirichlet process (MDP) predictive \eqref{eq:dirichlet} with a beta base distribution, BETEL \eqref{eq:retel-posterior_and_predictive} with $\tau_n=0$, and RETEL \eqref{eq:retel-posterior_and_predictive} with $\tau_n=1$ and the two-point regulariser.
We compare against the empirical predictive and a parametric beta predictive within the same framework, and against prior-free baselines outside it: Hedged-Capital \citep{WaudbySmith2024a}, PRECiSE-CO96 \citep{Orabona2023}, and the full Cover--Ordentlich universal portfolio (UP) method \citep{CoverOrdentlich1996}.
The main text reports the proposed methods together with the empirical predictive and UP, which we find to be the strongest prior-free baselines in our experiments; full comparisons with Hedged-Capital, PRECiSE-CO96, and parametric Bayes are deferred to \cref{app:additional_results}.
We report CS width and, in decision-making experiments, stopping time; in synthetic experiments we also report empirical cumulative miscoverage and widths normalised by the oracle CS whose coefficient maximises expected log-growth under the true distribution.
Unless otherwise specified, all experiments use confidence level $\alpha=0.1$, truncation parameter $c=0.95$, and an equally spaced grid of $500$ candidate means for test inversion.
Method-specific details are provided in \cref{app:implementation_details}.

\paragraph{Synthetic experiments.}

We first consider synthetic data, where $\Ptrue$ is known and the oracle coefficient $\lambdaopt(\mu)$ in \cref{optimallambda}, and hence the oracle confidence sequence $C_{\alpha,n}^\star$, can be computed.
This lets us assess how close each method comes to the log-optimal benchmark suggested by \eqref{eq:oracle}.
We generate i.i.d.~observations on $[0,1]$ from six distributions: $\mathrm{Bernoulli}(p)$ with $p\in\{0.1,0.5\}$, $\mathrm{Beta}(a,b)$ with $(a,b)\in\{(0.5,0.5),(1,1),(10,30)\}$, and the asymmetric mixture $0.25\,\mathrm{Beta}(5,15) + 0.75\,\mathrm{Beta}(15,5)$.
The Bernoulli and beta-mixture examples test departures from the beta working model used by the parametric and MDP predictives.
In the main text, we report the informative-prior comparison for the mixture example.
The remaining synthetic distributions under informative priors are shown in \cref{app:synthetic}, while the non-informative and misspecified-prior regimes are considered in \cref{app:additional_priors}.
Detailed prior specifications are provided in \cref{app:prior_specifications_synthetic}.

\Cref{fig:simulation_beta_mixture_main} compares methods on the beta-mixture experiment in terms of oracle-normalised CS width and empirical cumulative miscoverage.
\begin{figure}
    \centering
    \includegraphics[width=0.9\textwidth]{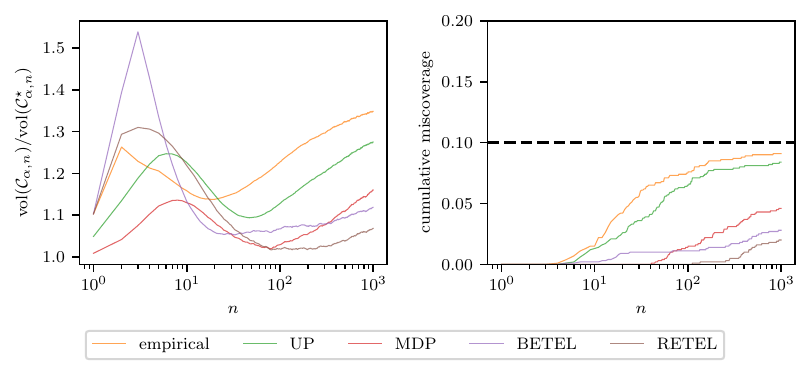}
    \caption{
        Synthetic beta-mixture experiment.
        The left and right panels show average CS length relative to the oracle and empirical cumulative miscoverage, respectively, over $1000$ repetitions;
        the dashed line marks $\alpha = 0.1$.
    }
    \label{fig:simulation_beta_mixture_main}
\end{figure}
With informative priors, Bayes-assisted methods are closer to the oracle than the empirical predictive and UP baselines over much of the sample-size range.
The same overall qualitative pattern is observed for the remaining synthetic distributions in \cref{fig:simulation_bernoulli_baselines,fig:simulation_beta_baselines}: when the prior is informative, Bayes-assisted predictives usually reduce oracle-normalised width relative to prior-free baselines.
Among the Bayes-assisted methods in \cref{fig:simulation_beta_mixture_main}, MDP benefits strongly from prior information at small $n$, while BETEL and RETEL gradually catch up as the sample size grows.
The prior-sensitivity experiments in \cref{app:additional_priors} show the complementary effect: diffuse or misspecified priors can increase oracle-normalised width, but empirical miscoverage remains controlled.
Thus prior quality affects efficiency, whereas coverage is guaranteed by the underlying test-martingale construction.

\paragraph{Best-arm identification for LLM evaluation.}
% \label{sec:LLM_eval}

As a first real-data application, we consider adaptive LLM evaluation for identifying the best of four LLMs on a sorting task.
We frame the problem as \emph{best-arm identification}: each LLM is a bandit arm, and pulling an arm queries the model on a random array of length $20$ and returns a rescaled Spearman rank-correlation score in $[0,1]$ measuring the quality of the sorted output.
We use LUCB \citep{Kalyanakrishnan2012,Kaufmann2013} with arm-wise confidence sequences, stopping once the leading model is separated from its competitors.
This avoids fixing the same benchmark size for all models and requires anytime-valid bounds because LUCB repeatedly inspects confidence sequences, adaptively samples arms, and stops at a data-dependent time, thereby potentially reducing the total number of LLM calls.
For this example, we elicit priors from cheaper auxiliary evaluations: for each LLM, we evaluate the same task on smaller arrays, fit a sigmoid trend in array length, and centre the arm prior at the extrapolated score for length $20$.
Details on LUCB, auxiliary array lengths, and prior parameters are given in \cref{app:LLM_eval}. \looseness=-1

\Cref{tab:method_comparison_spear} reports average LUCB stopping time over $50$ repetitions, paired ranks, and win/tie/loss counts against the empirical predictive baseline, the strongest prior-free method in this experiment.
BETEL performs best in terms of both average calls and paired rank, stopping earlier than the empirical predictive in $35$ of $50$ paired repetitions.
RETEL also improves on the empirical predictive more often than not, while MDP is close to empirical, suggesting that its beta-based prior component may be less suitable in this experiment.
UP requires more calls on average than the empirical predictive.
Full-baseline results and paired-comparison analyses are reported in \cref{app:LLM_eval_results}.

\begin{table*}[t]
    \centering
    \footnotesize
    \caption{
        Sequential stopping-time comparisons in the two real-data applications.
        Rank is computed within each repetition; W/T/L is against the empirical predictive baseline.
    }
    \label{tab:real_data_stopping}

    \begin{subtable}[t]{0.48\textwidth}
        \centering
        \caption{LLM best-arm identification. Avg calls is the LUCB stopping time.}
        \label{tab:method_comparison_spear}
        \begin{tabular}{lccc}
            \toprule
            Method & Avg calls (s.e.) & Avg rank (s.e.)  & W/T/L \\
            \midrule
            BETEL & 85.28 (3.94) & 1.78 (0.15) & 35/2/13 \\
            RETEL & 93.12 (3.65) & 2.64 (0.21) & 27/2/21 \\
            MDP & 97.64 (4.30) & 3.04 (0.13) & 24/4/22 \\
            Emp. & 98.48 (5.67) & 3.56 (0.32) & -- \\
            UP & 106.68 (4.71) & 4.96 (0.15) & 15/5/30 \\
            \bottomrule
        \end{tabular}
    \end{subtable}
    \hfill
    \begin{subtable}[t]{0.48\textwidth}
        \centering
        \caption{\textsc{attributedQA} PPI test. Avg labels is time to rejection.}
        \label{tab:ppi_attributedqa_test}
        \begin{tabular}{lccc}
            \toprule
            Method & Avg labels (s.e.) & Avg rank (s.e.) & W/T/L \\
            \midrule
            MDP & 52.79 (3.65) & 3.58 (0.16) & 65/31/4 \\
            BETEL & 54.60 (3.98) & 3.95 (0.23) & 63/13/24 \\
            RETEL & 58.24 (3.12) & 4.87 (0.25) & 53/3/44 \\
            Emp. & 65.17 (4.77) & 4.98 (0.29) & -- \\
            UP & 66.95 (3.61) & 6.27 (0.18) & 33/11/56 \\
            \bottomrule
        \end{tabular}
    \end{subtable}
\end{table*}

\paragraph{Prediction-powered inference.}
% \label{sec:ppi}

As a second real-data application, we consider prediction-powered inference (PPI) \citep{angelopoulos2023prediction} for bounded mean estimation.
Given a fixed prediction rule $f$, PPI decomposes the target mean as $\theta^\star = \mathbb E[Y] = \mathbb E[f(X)] + \mathbb E[Y-f(X)] = m^\star + \Delta^\star$,
where $m^\star$ is estimated from unlabelled data and the rectifier $\Delta^\star$ from labelled residuals.
When labelled and unlabelled data both arrive sequentially, one can build confidence sequences for $m^\star$ and $\Delta^\star$ separately and combine them by a union bound.
As noted by \citet{Kilian2025}, however, this merging step can be loose and can obscure the contribution of the confidence-sequence method used for the rectifier.
We therefore follow their simplified setting: a large pool of $N$ unlabelled examples is available at the start, $m^\star$ is treated as fixed through the plug-in estimate $\tfrac{1}{N}\sum_{j=1}^N f(X_j)$, and sequential uncertainty is concentrated on the labelled residuals $Y_i-f(X_i)$.
Since these residuals are bounded, we apply our bounded-mean CSs after rescaling to $[0,1]$;
see \cref{app:ppi_construction} for details.
For this application, we use priors centred at zero for $\Delta^\star$, encoding the belief that the prediction rule is approximately unbiased.
After rescaling the residual range to $[0,1]$, this corresponds to symmetric beta priors centred at $1/2$, with concentration controlling the strength of the prior belief; full specifications are given in \cref{app:ppi_prior_specification}.

\Cref{fig:ppi_main} compares methods on two PPI tasks: estimating the proportion of galaxies with spiral arms in \textsc{galaxies}, and estimating the difference in attribution reliability between two retrieve-then-read QA systems, RTR-10 and RTR-4, on \textsc{attributedQA}.
For \textsc{attributedQA}, both labels and predictions are differences between the two systems' binary attribution outcomes/probabilities; dataset details are in \cref{app:ppi_datasets}.
\begin{figure}
    \centering
    \includegraphics[width=0.9\textwidth]{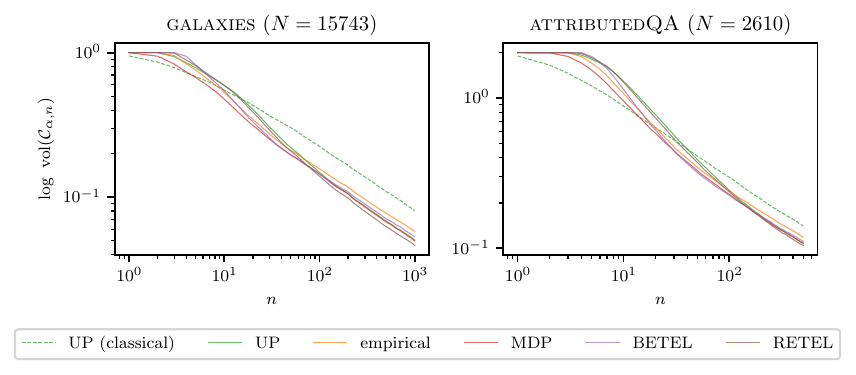}
    \caption{
        Prediction-powered mean estimation.
        The left and right panels show average CS length over 1000 repetitions for the \textsc{galaxies} and \textsc{attributedQA} datasets.
        Curves labelled ``classical'' use only labelled observations, while the remaining curves use the PPI construction.
    }
    \label{fig:ppi_main}
\end{figure}
PPI-based CSs are substantially narrower than classical labelled-only CSs on both datasets.
Among PPI methods, the Bayes-assisted procedures are competitive with the empirical and UP baselines: MDP performs best at small sample sizes, while RETEL is strongest at larger sample sizes in these experiments.

These width reductions translate into earlier anytime-valid decisions.
On \textsc{attributedQA}, testing $H_0:\theta^\star \leq 0$ asks whether RTR-10 has higher attribution reliability than RTR-4; rejection occurs once $0$ is excluded from the CS.
\Cref{tab:real_data_stopping} shows that MDP, BETEL, and RETEL all reject using fewer labelled examples on average than the empirical predictive and UP baselines.
The paired comparisons against the empirical predictive are strongest for MDP and BETEL, while RETEL also wins more often than it loses.
Thus an informative prior on the rectifier can shorten CSs and reduce the labelled sample size needed for sequential conclusions.
Full comparisons with additional baselines are reported in \cref{app:ppi_results}.

\section{Discussion and limitations}
\label{sec:discussion}

This paper introduced a Bayes-assisted framework for constructing anytime-valid confidence sequences for bounded means: Bayesian or pseudo-Bayesian predictives are used to choose powerful predictable betting updates, while validity remains guaranteed by test-martingale inversion even under prior or working-model misspecification. The theory separates validity from efficiency, showing oracle-level asymptotic log-growth when the predictive distribution is consistent, and the MDP, BETEL, and RETEL constructions illustrate how informative prior information can yield shorter confidence sequences in practice. Several limitations remain: computing the sets can be substantially more expensive than closed-form baselines because it requires evaluating martingales over candidate means and repeatedly updating predictives (see \cref{app:computational_complexity}); the inverted sets are not guaranteed to be single intervals, which can complicate reporting or downstream use; and while poor priors do not invalidate coverage, the efficiency gains depend on the prior or auxiliary predictive being sufficiently informative (see \cref{app:additional_priors}). These limitations are natural targets for future work on faster algorithms, interval-valued approximations, and adaptive or data-driven prior construction.

\clearpage

\newpage

\bibliographystyle{plainnat}
\bibliography{bacs-biblio}

\newpage

%//////////Supplementary 
\appendix

\section*{Technical Appendices and Supplementary Material}

% !TEX root = bacs-main.tex

\section{Proofs}

\subsection{Proof of \cref{prop:test_martingale_property}}

\testmarginaleproperty*

\begin{proof}
For $\lambda\in I_{\mu,c}$ and $x\in[0,1]$,
$1+\lambda(x-\mu)\geq 1-c\geq0$, so the process is nonnegative.  Since
$\lambdahat_i(X_{1:i-1},\mu)$ is $\calF_{i-1}$-measurable, under $H_\mu$,
\begin{align*}
\E\left[
1+\lambdahat_i(X_{1:i-1},\mu)(X_i-\mu)
\mid \calF_{i-1}
\right]
&=
1+\lambdahat_i(X_{1:i-1},\mu)
\E[X_i-\mu\mid \calF_{i-1}] \\
&=1 .
\end{align*}
Hence $\E[W_i(X_{1:i},\mu)\mid\calF_{i-1}]=W_{i-1}(X_{1:i-1},\mu)$.
\end{proof}

\subsection{Proof of \cref{prop:anytime_validity}}

\anytimevalidity*

\begin{proof}
By \Cref{prop:test_martingale_property},
$(W_n(X_{1:n},\mutrue))_{n\geq0}$ is a nonnegative martingale with initial value one.  Ville's inequality \citep{Ville1939} gives
\[
\Ptrue\left(
\sup_{n\geq1} W_n(X_{1:n},\mutrue) > \frac{1}{\alpha}
\right)
\leq \alpha .
\]
On the complement of this event, $\mutrue$ belongs to $\CS$ for every
$n\geq1$, which proves the claim.
\end{proof}

\subsection{Proof of \cref{thm:logoptimality}}

We first recall the theorem.

\thmlogoptimality*

\textbf{Proof.} All probabilities and expectations in this proof are taken under the true data-generating law $\Ptrue$ of the i.i.d. sequence $(X_i)_{i\geq 1}$, unless explicitly stated otherwise. For readability, write
\[
I_{\mu,c}:=\left[-\frac{c}{1-\mu},\frac{c}{\mu}\right],
\qquad
M_n(\lambda):=M_n(\lambda,X_{1:n-1},\mu),
\qquad
M(\lambda):=M(\lambda,\mu),
\]
and define
\[
f_\lambda(x):=\log\{1+\lambda(x-\mu)\},\qquad \lambda\in I_{\mu,c},\ x\in[0,1].
\]
For every $(\lambda,x)\in I_{\mu,c}\times[0,1]$, we have $1+\lambda(x-\mu)\geq 1-c>0$. Hence $f_\lambda$ is jointly continuous and uniformly bounded: with
\[
B_{\mu,c}:=\max\left\{ |\log(1-c)|,\log\left(1+c\frac{1-\mu}{\mu}\right),
\log\left(1+c\frac{\mu}{1-\mu}\right)\right\},
\]
we have $|f_\lambda(x)|\leq B_{\mu,c}$ for all $(\lambda,x)\in I_{\mu,c}\times[0,1]$. Moreover, for fixed $\lambda\in I_{\mu,c}$, the map $x\mapsto f_\lambda(x)$ is Lipschitz with a constant that is uniform in $\lambda$:
\[
\left|\partial_x f_\lambda(x)\right|
=\left|\frac{\lambda}{1+\lambda(x-\mu)}\right|
\leq \frac{c}{(1-c)\min\{\mu,1-\mu\}}
=:L_{\mu,c}.
\]
Therefore, by Kantorovich--Rubinstein duality for $\Wassone$ \cite[Sec.~1.2]{Santambrogio2015},
\begin{align}
Z_n
:=\sup_{\lambda\in I_{\mu,c}} |M_n(\lambda)-M(\lambda)|
\leq L_{\mu,c}\,
\Wassone\left(\bbQ_{n\mid n-1}(\cdot\mid X_{1:n-1}),\Ptrue\right).
\label{eq:uniform_Mn_M}
\end{align}
Under the assumption of \cref{thm:logoptimality}, $Z_n\to0$ in probability. If the convergence in Wasserstein distance is strengthened to almost sure convergence, then the same display gives $Z_n\to0$ almost surely.

We next show that $M$ has a unique maximiser. For each $x\in[0,1]$, the function $\lambda\mapsto f_\lambda(x)$ is concave on $I_{\mu,c}$, and it is strictly concave whenever $x\neq\mu$. Since $\Ptrue$ is non-degenerate, $\Ptrue(X\neq\mu)>0$. Hence, for any distinct $\lambda_1,\lambda_2\in I_{\mu,c}$ and $t\in(0,1)$,
\[
M(t\lambda_1+(1-t)\lambda_2)>tM(\lambda_1)+(1-t)M(\lambda_2),
\]
so $M$ is strictly concave. Since $M$ is continuous on the compact interval $I_{\mu,c}$, it admits a unique maximiser $\lambda^*=\lambda^*(\mu)$.

We now prove convergence of the maximisers. Fix $\epsilon>0$ and set
\[
A_\epsilon:=\{\lambda\in I_{\mu,c}: |\lambda-\lambda^*|\geq\epsilon\}.
\]
If $A_\epsilon=\varnothing$, there is nothing to prove. Otherwise, by compactness of $A_\epsilon$ and uniqueness of $\lambda^*$,
\[
\Delta_\epsilon:=M(\lambda^*)-\sup_{\lambda\in A_\epsilon}M(\lambda)>0.
\]
On the event $\{\widehat\lambda_n\in A_\epsilon\}$, the maximality of $\widehat\lambda_n$ for $M_n$ implies
\begin{align*}
\Delta_\epsilon
&\leq M(\lambda^*)-M(\widehat\lambda_n)\\
&=\{M(\lambda^*)-M_n(\lambda^*)\}+\{M_n(\lambda^*)-M_n(\widehat\lambda_n)\}
  +\{M_n(\widehat\lambda_n)-M(\widehat\lambda_n)\}\\
&\leq 2Z_n.
\end{align*}
Consequently,
\[
\bbP\{|\widehat\lambda_n-\lambda^*|\geq\epsilon\}
\leq \bbP\{Z_n\geq \Delta_\epsilon/2\}\to0,
\]
which proves \cref{eq:lambda_as}. The same argument also shows that if $Z_n\to0$ almost surely, then $\widehat\lambda_n\to\lambda^*$ almost surely.

Define the log increment
\begin{align}
Y_n:=\log\{1+\widehat\lambda_n(X_n-\mu)\},
\label{eq:logincrementYn}
\end{align}
and let
\[
D_n:=Y_n-\E[Y_n\mid\calF_{n-1}],
\qquad
S_n:=\sum_{i=1}^n D_i.
\]
Then $(S_n)_{n\geq1}$ is a martingale with respect to $(\calF_n)_{n\geq1}$. Since $|Y_n|\leq B_{\mu,c}$, we have $|D_n|\leq 2B_{\mu,c}$ and
\[
\sum_{i=1}^\infty \frac{\E[D_i^2\mid\calF_{i-1}]}{i^2}
\leq 4B_{\mu,c}^2\sum_{i=1}^\infty \frac{1}{i^2}<\infty.
\]
The martingale law of large numbers \cite[Theorem~2.18]{Hall1980} gives
\begin{align}
\frac{S_n}{n}\to0\qquad\text{a.s.}
\label{eq:convSn}
\end{align}
Moreover, since $X_i$ is independent of $\calF_{i-1}$ and $\widehat\lambda_i$ is $\calF_{i-1}$-measurable,
\[
\E[Y_i\mid\calF_{i-1}]
=\E\left[\log\{1+\widehat\lambda_i(X_i-\mu)\}\mid\calF_{i-1}\right]
=M(\widehat\lambda_i).
\]
Thus
\begin{align}
\frac{1}{n}\log W_n
=\frac{1}{n}\sum_{i=1}^n M(\widehat\lambda_i)+\frac{S_n}{n}.
\label{eq:decomplogWn}
\end{align}
Because $\widehat\lambda_n\to\lambda^*$ in probability and $M$ is continuous and bounded on $I_{\mu,c}$, $M(\widehat\lambda_n)\to M(\lambda^*)$ in $L^1$. Hence, by Ces\`aro summability,
\[
\frac{1}{n}\sum_{i=1}^n M(\widehat\lambda_i)\to M(\lambda^*)
\qquad\text{in }L^1,
\]
and therefore also in probability. Combining this convergence with \cref{eq:convSn,eq:decomplogWn} gives
\[
\frac{1}{n}\log W_n\to M(\lambda^*)
\qquad\text{in probability},
\]
which proves \cref{eq:Wn_proba}.

Finally,
\[
\frac{1}{n}\E[\log W_n]
=\frac{1}{n}\sum_{i=1}^n \E[Y_i]
=\frac{1}{n}\sum_{i=1}^n \E[M(\widehat\lambda_i)]
\to M(\lambda^*)
\]
by the same boundedness, $L^1$ convergence, and Ces\`aro argument. This proves \cref{eq:Wn_L1}.

\paragraph{Almost-sure counterpart.}
If the assumption $\Wassone(\bbQ_{n\mid n-1}(\cdot\mid X_{1:n-1}),\Ptrue)\to0$ holds almost surely rather than merely in probability, then \cref{eq:uniform_Mn_M} gives uniform convergence $Z_n\to0$ almost surely. The separation argument for the argmax then yields $\widehat\lambda_n\to\lambda^*$ almost surely. Consequently, $M(\widehat\lambda_n)\to M(\lambda^*)$ almost surely, and Ces\`aro summability gives $n^{-1}\sum_{i=1}^n M(\widehat\lambda_i)\to M(\lambda^*)$ almost surely. Together with the martingale law of large numbers in \cref{eq:convSn}, this yields
\[
\frac{1}{n}\log W_n\to M(\lambda^*)
\qquad\text{almost surely}.
\]
The expectation convergence in \cref{eq:Wn_L1} remains valid by the bounded $L^1$ argument above.

\subsection{Proof of \cref{thm:finitn}}

We already proved that
$$
\sup_{\lambda\in I_{\mu,c}} |M_n(\lambda)-M(\lambda)|\leq L_{\mu,c}\times \Wassone\left( \bbQ_{n|n-1}(\cdot\mid X_{1:n-1}),\Ptrue\right)
$$
We have
\begin{align*}
0\leq M(\lambda^*) - M(\widehat\lambda_n) &=(M(\lambda^*)-M_n(\lambda^*)) + (M_n(\lambda^*)-M_n(\widehat\lambda_n)) + (M_n(\widehat\lambda_n) - M(\widehat\lambda_n) )\\
&\leq 2\sup_{\lambda\in I_{\mu,c}} |M_n(\lambda)-M(\lambda)|\\
&\leq 2L_{\mu,c}\times \Wassone\left( \bbQ_{n|n-1}(\cdot\mid X_{1:n-1}),\Ptrue\right).
\end{align*}
Additionally,
$$
\E[\log W_n]=\sum_{i=1}^n \E[Y_i]=\sum_{i=1}^n \E[\E[ Y_i\mid \calF_{i-1}]]=\sum_{i=1}^n \E[M(\widehat\lambda_i)],
$$
where $Y_n$ is the log increment defined in \cref{eq:logincrementYn}. It follows that
$$
0\leq M(\lambda^*) - \frac{1}{n}\E[\log W_n]\leq 2L_{\mu,c}\times \frac{1}{n}\sum_{i=1}^n \E\left[\Wassone\left( \bbQ_{i|i-1}(\cdot\mid X_{1:i-1}),\Ptrue\right)\right].
$$

\subsection{Proof of \Cref{prop:wass_rate}}

Let
\[
\omega_n=\frac{\kappa}{\kappa+n}.
\]
By convexity of \(\Wassone\) in its first argument,
\begin{align*}
\Wassone(\bbQ_{n+1\mid n},\Ptrue)
&\le
\omega_n \Wassone(\bbH_{n+1\mid n},\Ptrue)
+
(1-\omega_n)\Wassone(\bbP_n,\Ptrue) .
\end{align*}
Since both \(\bbH_{n+1\mid n}\) and \(\Ptrue\) are supported on \([0,1]\),
\(\Wassone(\bbH_{n+1\mid n},\Ptrue)\le 1\). Therefore
\[
\E_{\Ptrue}\{\Wassone(\bbQ_{n+1\mid n},\Ptrue)\}
\le
\frac{\kappa}{\kappa+n}
+
\E_{\Ptrue}\{\Wassone(\bbP_n,\Ptrue)\}.
\]
The second term is \(O(n^{-1/2})\) for distributions supported on a bounded
interval, so the displayed expectation is \(O(n^{-1/2})\). Markov's
inequality gives convergence in \(\Ptrue\)-probability, and
\Cref{thm:logoptimality} gives the final claim.

\subsection{Proof of \Cref{prop:retel}}
\label{app:proof-retel}

We first recall the proposition.
\propretel*

\textbf{Proof.} Write
\[
    \bar X_n=\frac1n\sum_{i=1}^n X_i,
    \qquad
    s_n^2=\frac1n\sum_{i=1}^n (X_i-\bar X_n)^2,
\]
and define the rescaled pseudo-likelihood
\[
    R_n(\mu)=\prod_{i=1}^n n w_{n,i}(\mu).
\]
The factor \(n^n\) is independent of \(\mu\), so \(R_n\) and
\(L_n\) induce the same pseudo-posterior.  In the unregularized case
\(\tau_n=0\), we use the convention from the main text that
\(R_n(\mu)=0\) whenever \(\mu\notin(\min_i X_i,\max_i X_i)\).  At points
where the likelihood is zero, the value assigned to \(\gamma_{n,\mu}\) is
irrelevant for the posterior statements below.

\begin{lemma}[Separation from the empirical mean]
\label{lem:retel-separation}
For every \(t\in(0,1]\),
\[
    \sup_{\mu\in[0,1]:\,|\mu-\bar X_n|\geq t} R_n(\mu)
    \leq
    \exp\{-n c(t)\},
    \qquad
    c(t)=\min\left\{-\log(1-t/2),\,\frac{t^2}{2}\right\}>0 .
\]
\end{lemma}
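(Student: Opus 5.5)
The plan is to read $\log R_n(\mu)$ as minus $n$ times a Kullback--Leibler divergence between the empirical measure $\bbP_n$ and a probability measure with mean exactly $\mu$, and then to bound that divergence from below by Pinsker's inequality.

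\emph{Degenerate case.} If $\tau_n=0$ and $\mu\notin(\min_i X_i,\max_i X_i)$, then $R_n(\mu)=0$ by convention and there is nothing to prove. Otherwise $\gamma_{n,\mu}$ is well defined, and the tilted measure $\widetilde{\bbP}_{n,\mu}$ from \eqref{eq:retel-projection} is a probability measure on $[0,1]$ with $\int z\,d\widetilde{\bbP}_{n,\mu}(z)=\mu$.

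\emph{Weights versus tilted masses.} By \eqref{eq:retel-gamma}, $\widetilde{\bbP}_{n,\mu}$ is the normalised measure $\bigl(\sum_j e^{\gamma(X_j-\mu)}\delta_{X_j}+\tau_n e^{\gamma(z-\mu)}\bbH_\mu(dz)\bigr)/Z_{n,\mu}$, where $\gamma=\gamma_{n,\mu}$. Consequently, for every distinct value $x$ among the observations,
\[
\sum_{i:X_i=x} w_{n,i}(\mu)\le \widetilde{\bbP}_{n,\mu}(\{x\}).
\]
This can be strict, because $\bbH_\mu$ may also place mass at $x$.

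\emph{KL representation.} Let $k_x=\#\{i:X_i=x\}$. Applying the log-sum inequality within each group of tied observations gives
\[
\sum_{i:X_i=x}\tfrac1n\log\frac{1}{n w_{n,i}}\ \ge\ \frac{k_x}{n}\log\frac{k_x/n}{\sum_{i:X_i=x}w_{n,i}}\ \ge\ \frac{k_x}{n}\log\frac{k_x/n}{\widetilde{\bbP}_{n,\mu}(\{x\})}.
\]
Summing over the distinct values $x$ yields
\[
-\tfrac1n\log R_n(\mu)\ \ge\ \KL\bigl(\bbP_n\,\|\,\widetilde{\bbP}_{n,\mu}\bigr).
\]
Absolute continuity is not an issue here, since $\bbP_n$ charges only atoms that $\widetilde{\bbP}_{n,\mu}$ also charges with positive mass.

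\emph{Quantitative lower bound.} Both measures live on $[0,1]$, and their means are $\bar X_n$ and $\mu$. The identity function is $1$-Lipschitz, bounded, and has oscillation at most $1$ on $[0,1]$, so
\[
\mathrm{TV}\bigl(\bbP_n,\widetilde{\bbP}_{n,\mu}\bigr)\ \ge\ |\bar X_n-\mu|\ \ge\ t.
\]
Pinsker's inequality then gives $\KL\ge 2t^2$. Hence
\[
R_n(\mu)\le e^{-2nt^2}\le e^{-nt^2/2}\le e^{-nc(t)},
\]
since $c(t)\le t^2/2$ by definition. This bound is uniform over all $\mu$ with $|\mu-\bar X_n|\ge t$, which is the claim. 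If one prefers to match the stated constant $c(t)$ directly, an alternative route would be a Chernoff-type bound comparing Bernoulli means; the Pinsker route is stronger than needed, so it suffices.

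\emph{Main obstacle.} The delicate step is the second one: handling the regularising mass $\tau_n\bbH_\mu$, which may share atoms with the data, together with ties among the $X_i$. The log-sum inequality over tied groups, combined with the domination $\sum_{i:X_i=x} w_{n,i}\le\widetilde{\bbP}_{n,\mu}(\{x\})$, is what makes the KL representation valid in full generality. Everything after that is a standard information inequality.
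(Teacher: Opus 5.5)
Your proof is correct, but it is organised differently from the paper's.

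The paper works on the index set $\{1,\dots,n\}$. It normalises the data weights to $p_i=w_{n,i}(\mu)/a$ with $a=1-w_{n,H}(\mu)$ and writes $\log R_n(\mu)=n\log a-n\KL(u\|p)$ with $u_i=1/n$. It then splits on the regularizer mass:
\begin{itemize}
\item if $1-a\geq t/2$, the $\log a$ term alone gives the bound $(1-t/2)^n$;
\item otherwise, the mean $m_p$ of the data part lies at distance at least $t/2$ from $\bar X_n$, and Pinsker applied to $\KL(u\|p)$ gives $e^{-nt^2/2}$.
\end{itemize}

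You instead compare $\bbP_n$ with the full tilted measure $\widetilde{\bbP}_{n,\mu}$, regularizing part included. This merges the two terms the paper treats separately: the exact identity is $-\frac1n\log R_n(\mu)=\KL(u\|p)-\log a$, and your log-sum bound shows this dominates $\KL(\bbP_n\|\widetilde{\bbP}_{n,\mu})$. Because $\widetilde{\bbP}_{n,\mu}$ has mean exactly $\mu$, a single Pinsker step with $\mathrm{TV}\geq|\bar X_n-\mu|$ gives $R_n(\mu)\leq e^{-2nt^2}$ with no case split. This is sharper than the paper's bound. Since $-\log(1-t/2)\geq t/2\geq t^2/2$ on $(0,1]$, the paper's $c(t)$ is simply $t^2/2$, so you gain a factor $4$ in the exponent.

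What the paper's index-vector bookkeeping buys is that ties, and atoms of $\bbH_\mu$ at data points, are a non-issue; that is why it needs no log-sum step. You can get the same benefit on the extended index set $\{1,\dots,n,H\}$. Set $q_i=w_{n,i}(\mu)$, $q_H=w_{n,H}(\mu)$, $u_i=1/n$, $u_H=0$, and use the $[0,1]$-valued function $i\mapsto X_i$, $H\mapsto m_H$, where $m_H$ is the mean of the tilted regularizer $\widetilde{\bbH}_{n,\mu}$. Then:
\begin{itemize}
\item $-\frac1n\log R_n(\mu)=\KL(u\|q)$ exactly;
\item the two means are $\bar X_n$ and $\mu$, so the same Pinsker argument applies directly;
\item your log-sum step is recovered as the data-processing inequality for the kernel $i\mapsto\delta_{X_i}$, $H\mapsto\widetilde{\bbH}_{n,\mu}$, which maps $u$ to $\bbP_n$ and $q$ to $\widetilde{\bbP}_{n,\mu}$.
\end{itemize}
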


\begin{proof}
Fix \(\mu\).  If \(R_n(\mu)=0\), there is nothing to prove.  Otherwise,
write
\[
    a=\sum_{i=1}^n w_{n,i}(\mu)=1-w_{n,H}(\mu)>0,
    \qquad
    p_i=\frac{w_{n,i}(\mu)}{a},\quad i=1,\ldots,n.
\]
Then \(p=(p_1,\ldots,p_n)\) is a probability vector.  Since
\(\widetilde{\bbP}_{n,\mu}\) has mean \(\mu\), and its regularizing
component is supported on \([0,1]\), there is an \(m_H\in[0,1]\) such
that
\[
    \mu=a\sum_{i=1}^n p_iX_i+(1-a)m_H .
\]
Let \(m_p=\sum_i p_iX_i\) and let \(u_i=1/n\).  The last display implies
\(|\mu-m_p|\leq 1-a\).  Moreover,
\[
    \log R_n(\mu)
    =
    \sum_{i=1}^n\log(nw_{n,i}(\mu))
    =
    n\log a+
    \sum_{i=1}^n\log(np_i)
    =
    n\log a-n\KL(u\|p),
\]
with the usual convention that \(\KL(u\|p)=\infty\) if \(p_i=0\) for
some \(i\).

Suppose \(|\mu-\bar X_n|\geq t\).  If \(1-a\geq t/2\), then
\[
    \log R_n(\mu)
    \leq
    n\log a
    \leq
    n\log(1-t/2).
\]
If instead \(1-a<t/2\), then \(|m_p-\bar X_n|\geq t/2\).  Since
\(X_i\in[0,1]\),
\[
    |m_p-\bar X_n|
    =
    \left|\sum_{i=1}^n(p_i-u_i)X_i\right|
    \leq \mathrm{TV}(p,u),
\]
where \(\mathrm{TV}(p,u)=\frac12\sum_i|p_i-u_i|\).  Pinsker's inequality
therefore gives
\[
    \KL(u\|p)
    \geq 2\,\mathrm{TV}(u,p)^2
    \geq \frac{t^2}{2}.
\]
Thus \(\log R_n(\mu)\leq -nt^2/2\) in the second case.  Combining the two
cases proves the lemma.
\end{proof}

\begin{lemma}[Local control of the tilting parameter]
\label{lem:retel-local-gamma}
For every \(v_0>0\) there exist constants \(\eta>0\) and \(C<\infty\),
depending only on \(v_0\), such that, for all sufficiently large \(n\),
whenever \(s_n^2\geq v_0\),
\[
    |\gamma_{n,\mu}|
    \leq
    C\left(|\mu-\bar X_n|+\frac{\tau_n}{n}\right)
\]
for every \(\mu\) for which \(|\mu-\bar X_n|\leq\eta\) and the tilt equation
is defined.
\end{lemma}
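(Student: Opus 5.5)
The plan is to work directly with the tilt equation \(\psi_{n,\mu}'(\gamma_{n,\mu})=0\), which says
\[
    \int (z-\mu)\,e^{\gamma_{n,\mu}(z-\mu)}\,d\bbP_{n,\mu}(z)=0 .
\]
Define \(g_{n,\mu}(\gamma):=\int (z-\mu)e^{\gamma(z-\mu)}\,d\bbP_{n,\mu}(z)\). Then \(g_{n,\mu}\) is increasing in \(\gamma\), with derivative \(g'_{n,\mu}(\gamma)=\int (z-\mu)^2e^{\gamma(z-\mu)}\,d\bbP_{n,\mu}(z)\geq 0\). The idea is to compare \(g_{n,\mu}(\gamma_{n,\mu})=0\) with \(g_{n,\mu}(0)\), which is explicit:
\[
    g_{n,\mu}(0)=\frac{n}{n+\tau_n}(\bar X_n-\mu)+\frac{\tau_n}{n+\tau_n}(m_{\bbH_\mu}-\mu),
\]
where \(m_{\bbH_\mu}\) is the mean of \(\bbH_\mu\). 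Since everything lives in \([0,1]\), \(|g_{n,\mu}(0)|\leq |\bar X_n-\mu|+\tau_n/n\). By the mean value theorem, \(|\gamma_{n,\mu}|\,\inf_{|\gamma'|\leq|\gamma_{n,\mu}|}g'_{n,\mu}(\gamma')\leq |g_{n,\mu}(0)|\). So it suffices to show that \(g'_{n,\mu}\) is bounded below by a constant multiple of \(v_0\) along the segment between \(0\) and \(\gamma_{n,\mu}\).

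\textbf{Step 1: a priori bound on \(\gamma\).} The derivative bound only holds for \(|\gamma'|\leq K\) with \(K\) fixed, so I first show \(|\gamma_{n,\mu}|\leq K\). I would argue by monotonicity. It suffices to show \(g_{n,\mu}(K)>0\) and \(g_{n,\mu}(-K)<0\) whenever \(|\mu-\bar X_n|\leq\eta\), \(s_n^2\geq v_0\) and \(n\) is large; the root is then trapped in \([-K,K]\).

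For \(g_{n,\mu}(K)\), keep only the empirical part, because the regularizer's contribution is \(O(\tau_n/n)e^{K}\) and \(\tau_n/n\to0\). Split the empirical sum according to whether \(z-\mu\) is positive or negative; the negative part is at least \(-\mu\,e^{0}\cdot\)mass\(\,\geq -1\). The variance condition forces enough empirical mass to the right of \(\mu\). Indeed, \(\mu\) is within \(\eta\) of \(\bar X_n\), and \(s_n^2\geq v_0\) together with boundedness gives \(\frac1n\sum (X_i-\bar X_n)_+\geq v_0/2\) (because \(\sum(X_i-\bar X_n)^2\leq \sum|X_i-\bar X_n|\)). Taking \(\eta\leq v_0/8\) then gives a fraction at least \(v_0/8\) of points with \(X_i-\mu\geq v_0/8\) (Paley–Zygmund style). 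So \(g_{n,\mu}(K)\gtrsim \tfrac{v_0}{8}\cdot\tfrac{v_0}{8}e^{Kv_0/8}-1-o(1)>0\) for a \(K\) depending only on \(v_0\). The case \(g_{n,\mu}(-K)\) is symmetric.

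\textbf{Step 2: curvature lower bound.} For \(|\gamma'|\leq K\), \(e^{\gamma'(z-\mu)}\geq e^{-K}\). Therefore
\[
    g'_{n,\mu}(\gamma')\geq e^{-K}\tfrac{n}{n+\tau_n}\tfrac1n\sum_i(X_i-\mu)^2\geq e^{-K}\tfrac12\bigl(s_n^2+(\bar X_n-\mu)^2\bigr)\geq e^{-K}v_0/2
\]
for \(n\) large. Combining this with the mean value inequality gives the claim with \(C=2e^{K}/v_0\).

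The main obstacle is Step 1, because the local curvature bound is useless without an a priori bound on \(\gamma_{n,\mu}\). The delicate point there is extracting, from \(s_n^2\geq v_0\) alone, a uniform positive fraction of observations bounded away from \(\mu\) on each side. The case \(\tau_n=0\) is simpler, since the regularizer term vanishes; the convention for undefined tilts is excluded by hypothesis.
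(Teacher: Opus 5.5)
Your argument is correct and has the same skeleton as the paper's proof. You bound the score at $\gamma=0$ by $|\bar X_n-\mu|+\tau_n/n$, lower-bound the curvature on a bounded window of $\gamma$, trap the root in that window, and finish with the mean value theorem. Two steps differ.

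For the curvature, you differentiate the unnormalised score $g_{n,\mu}=e^{\psi_{n,\mu}}\psi_{n,\mu}'$. Its derivative is a second moment about $\mu$, which you bound below directly by $e^{-K}\tfrac12\{s_n^2+(\bar X_n-\mu)^2\}$. The paper instead bounds the tilted variance $\psi_{n,\mu}''(\gamma)$ from below using the density-ratio bound $e^{-2\delta}$ and the pairwise-difference formula for the variance. Your route is slightly more direct.

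For the trapping step, you give a separate global argument: a fraction $v_0/4$ of the sample lies at distance at least $v_0/8$ on each side of $\mu$, and exponential growth at a large $K$ dominates. The paper gets the trapping for free from the curvature bound. On $[-\delta,\delta]$ one has $\psi_{n,\mu}''\geq c_0$, so $\psi_{n,\mu}'(\pm\delta)$ has the sign of $\pm1$ as soon as $|\psi_{n,\mu}'(0)|<c_0\delta$, and choosing $\eta<c_0\delta/2$ guarantees this for large $n$. The same device works for your $g_{n,\mu}$. With any fixed $K$, say $K=1$, your Step~2 gives $g_{n,\mu}(K)\geq g_{n,\mu}(0)+Ke^{-K}v_0/2>0$ once $\eta$ and $\tau_n/n$ are small, and symmetrically $g_{n,\mu}(-K)<0$. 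So what you call the main obstacle is not really one.

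Your Step~1 does buy a somewhat larger neighbourhood, $\eta=v_0/8$, but at the cost of a much worse constant. It needs $K$ of order $v_0^{-1}\log(1/v_0)$, so $C=2e^K/v_0$ is of order $\exp\{c\,v_0^{-1}\log(1/v_0)\}$, compared with $C=c_0^{-1}=2e^{4\delta}/v_0$ in the paper. The lemma is only used through the existence of some $\eta$ and $C$ depending on $v_0$, so both versions suffice.
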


\begin{proof}
For fixed \(\mu\), write
\[
    \psi_{n,\mu}(\gamma)
    =
    \log\int e^{\gamma(z-\mu)}\,d\bbP_{n,\mu}(z).
\]
The tilting parameter solves \(\psi_{n,\mu}'(\gamma_{n,\mu})=0\).  Let
\(h_{n,\mu}=\int z\,d\bbH_\mu(z)\in[0,1]\).  The mean of the base measure is
\[
    m_{n,\mu}
    =
    \int z\,d\bbP_{n,\mu}(z)
    =
    \frac{n}{n+\tau_n}\bar X_n
    +
    \frac{\tau_n}{n+\tau_n}h_{n,\mu},
\]
so
\begin{equation}
\label{eq:retel-local-score-bound}
    |\psi_{n,\mu}'(0)|
    =
    |m_{n,\mu}-\mu|
    \leq
    |\bar X_n-\mu|+\frac{\tau_n}{n+\tau_n}.
\end{equation}
Moreover, by the variance decomposition for mixtures,
\[
    \psi_{n,\mu}''(0)
    =
    \var_{\bbP_{n,\mu}}(Z)
    \geq
    \frac{n}{n+\tau_n}s_n^2 .
\]
If \(s_n^2\geq v_0\), then, since \(\tau_n=o(n)\), the last quantity is
bounded below by \(v_0/2\) for all sufficiently large \(n\).

Fix a small constant \(\delta>0\).  For \(|\gamma|\leq\delta\), let
\(\bbP_{n,\mu}^\gamma\) be the exponentially tilted version of
\(\bbP_{n,\mu}\).  Since \(z-\mu\in[-1,1]\),
\[
    \frac{d\bbP_{n,\mu}^\gamma}{d\bbP_{n,\mu}}(z)
    =
    \exp\{\gamma(z-\mu)-\psi_{n,\mu}(\gamma)\}
    \geq e^{-2\delta}.
\]
Using
\[
    \var_Q(Z)=\frac12\int\!\int (z-z')^2\,dQ(z)dQ(z'),
\]
we obtain
\[
    \psi_{n,\mu}''(\gamma)
    =
    \var_{\bbP_{n,\mu}^\gamma}(Z)
    \geq
    e^{-4\delta}\var_{\bbP_{n,\mu}}(Z)
    \geq
    e^{-4\delta}v_0/2
    =:c_0
\]
for all \(|\gamma|\leq\delta\) and all large \(n\).

Choose \(\eta>0\) such that \(\eta<c_0\delta/2\).  Since \(\tau_n/n\to0\),
\eqref{eq:retel-local-score-bound} implies that, uniformly over
\(|\mu-\bar X_n|\leq\eta\),
\[
    |\psi_{n,\mu}'(0)|<c_0\delta
\]
for all large \(n\).  Hence
\(\psi_{n,\mu}'(-\delta)<0<\psi_{n,\mu}'(\delta)\), and the root lies in
\((-\delta,\delta)\).  By the mean-value theorem,
\[
    |\gamma_{n,\mu}|
    \leq
    c_0^{-1}|\psi_{n,\mu}'(0)|
    \leq
    c_0^{-1}\left(|\mu-\bar X_n|+\frac{\tau_n}{n+\tau_n}\right),
\]
which proves the claim, after increasing the constant if necessary.
\end{proof}

\begin{lemma}[Subexponential denominator lower bound]
\label{lem:retel-denominator}
Let
\[
    D_n=\int_0^1 R_n(\mu)\pi(d\mu).
\]
Under the assumptions of \Cref{prop:retel}, with \(\Ptrue\)-probability
tending to one,
\[
    D_n\geq \frac{c}{n}\exp\{-C(1+\tau_n)\}
\]
for constants \(c,C>0\) independent of \(n\).  In particular,
\(\log D_n=o_{\Ptrue}(n)\).
\end{lemma}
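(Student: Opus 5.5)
The plan is to restrict the integral defining \(D_n\) to a small window of width of order \(1/n\) around \(\bar X_n\). On that window \(\log R_n(\mu)\) is bounded below by \(-C(1+\tau_n)\), and the prior mass of the window is at least \(c/n\). First I fix a high-probability event \(E_n\) on which three things hold: \(s_n^2\geq v_0:=\sigma^2/2\) (law of large numbers); \(|\bar X_n-\mutrue|\leq \eta_0\), where \(\eta_0\) is small enough that \(\pi\) has density at least \(\underline\pi>0\) on \([\mutrue-2\eta_0,\mutrue+2\eta_0]\) (consistency of \(\bar X_n\)); and, in the BETEL case \(\tau_n=0\), \(\bar X_n\) lies at distance at least \(1/n\) from \(\min_i X_i\) and \(\max_i X_i\). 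The last condition holds with probability tending to one because \(\Ptrue\) is non-degenerate: \(\min_i X_i\) and \(\max_i X_i\) converge to the endpoints of the support, which are separated from \(\mutrue\) by a fixed gap. On \(E_n\), take \(U_n=\{\mu:|\mu-\bar X_n|\leq 1/n\}\). Then \(U_n\) lies inside the empirical convex hull when \(\tau_n=0\), and inside \((0,1)\) when \(\tau_n>0\), so the tilt is defined on \(U_n\). Moreover \(\pi(U_n)\geq 2\underline\pi/n\).

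The main step is a lower bound on \(\log R_n(\mu)=\sum_i\log(nw_{n,i}(\mu))\) for \(\mu\in U_n\). Write \(nw_{n,i}=n e^{\gamma(X_i-\mu)}/Z_{n,\mu}\) with \(\gamma=\gamma_{n,\mu}\). This gives
\[
\log R_n(\mu)=\gamma\sum_i(X_i-\mu)-n\log\frac{Z_{n,\mu}}{n}.
\]
The first term equals \(n\gamma(\bar X_n-\mu)\). For the second, note that \(Z_{n,\mu}/n=(1+\tau_n/n)\exp\{\psi_{n,\mu}(\gamma)\}\). Because \(\gamma_{n,\mu}\) minimises the convex function \(\psi_{n,\mu}\), we have \(\psi_{n,\mu}(\gamma)\leq\psi_{n,\mu}(0)=0\). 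Hence
\[
\log R_n(\mu)\geq n\gamma(\bar X_n-\mu)-n\log(1+\tau_n/n)\geq -n|\gamma|\,|\bar X_n-\mu|-\tau_n.
\]
By \Cref{lem:retel-local-gamma}, valid since \(1/n\leq\eta\) for large \(n\), we have \(|\gamma|\leq C(1/n+\tau_n/n)\). Since \(|\bar X_n-\mu|\leq 1/n\) on \(U_n\), the first term is at least \(-C(1+\tau_n)/n\). Thus \(\log R_n(\mu)\geq -C'(1+\tau_n)\) on \(U_n\).

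Integrating over \(U_n\) gives \(D_n\geq (2\underline\pi/n)e^{-C'(1+\tau_n)}\) on \(E_n\). The conclusion \(\log D_n=o_{\Ptrue}(n)\) then follows because \(\tau_n=o(n)\), \(\log n=o(n)\), and \(R_n(\mu)\leq 1\) (from \(\log R_n\leq n\log a\leq 0\), as in \Cref{lem:retel-separation}), so \(D_n\leq 1\). I expect the main obstacle to be the bookkeeping in the likelihood lower bound. One must check that the normalisation \(Z_{n,\mu}\), which carries the regulariser mass \(\tau_n M_{\bbH_\mu}\), is exactly compatible with \(\psi_{n,\mu}\) as written. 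In the \(\tau_n=0\) case one must also confirm that the tilt equation is solvable on the whole window \(U_n\).
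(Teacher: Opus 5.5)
Your proof is correct and follows the paper's argument. It uses the same window of half-width $1/n$ around $\bar X_n$, the same high-probability event (including the convex-hull condition when $\tau_n=0$), \Cref{lem:retel-local-gamma} to control $\gamma_{n,\mu}$, and a prior-mass lower bound of order $1/n$. The one difference is the likelihood bound on the window: the paper uses the cruder factorwise estimate $n w_{n,i}(\mu)\geq \frac{n}{n+\tau_n}e^{-2|\gamma_{n,\mu}|}$, whereas your exact identity together with $\psi_{n,\mu}(\gamma_{n,\mu})\leq\psi_{n,\mu}(0)=0$ gives the sharper bound $\log R_n(\mu)\geq -n|\gamma_{n,\mu}|\,|\bar X_n-\mu|-\tau_n$. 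You also note $D_n\leq 1$, which the paper leaves implicit in the $o_{\Ptrue}(n)$ claim.
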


\begin{proof}
Since \(\bar X_n\to\mutrue\) and \(s_n^2\to\sigma^2>0\) almost surely,
and since \(\mutrue\in(0,1)\), with probability tending to one the interval
\[
    B_n=\left[\bar X_n-\frac1n,\bar X_n+\frac1n\right]
\]
is contained in \((0,1)\), lies in a neighbourhood on which the prior
density is bounded below, and has \(s_n^2\geq\sigma^2/2\).  Applying
\Cref{lem:retel-local-gamma} with \(v_0=\sigma^2/2\), we also have the
local bound in that lemma on this event.  In the unregularized case
\(\tau_n=0\), the same event may also be taken to include
\(B_n\subset(\min_i X_i,\max_i X_i)\): this holds with probability tending
to one because \(\sigma^2>0\) implies positive probability both below and
above \(\mutrue\).

On this event, for every \(\mu\in B_n\),
\[
    |\gamma_{n,\mu}|
    \leq
    C_1\frac{1+\tau_n}{n}.
\]
Using \(|X_i-\mu|\leq1\) and
\(M_{\bbH_\mu}(\gamma_{n,\mu})\leq e^{|\gamma_{n,\mu}|}\), we have
\[
    n w_{n,i}(\mu)
    =
    \frac{n e^{\gamma_{n,\mu}(X_i-\mu)}}
    {\sum_{j=1}^n e^{\gamma_{n,\mu}(X_j-\mu)}
      +\tau_n M_{\bbH_\mu}(\gamma_{n,\mu})}
    \geq
    \frac{n}{n+\tau_n}e^{-2|\gamma_{n,\mu}|}.
\]
Therefore, uniformly over \(\mu\in B_n\),
\[
    R_n(\mu)
    \geq
    \left(\frac{n}{n+\tau_n}\right)^n
    \exp\{-2n|\gamma_{n,\mu}|\}
    \geq
    \exp\{-C_2(1+\tau_n)\},
\]
where we used \(\log(1+x)\leq x\).  The prior mass of \(B_n\) is at least
\(c/n\) on the same event, so
\[
    D_n
    \geq
    \int_{B_n}R_n(\mu)\pi(d\mu)
    \geq
    \frac{c}{n}\exp\{-C_2(1+\tau_n)\}.
\]
Since \(\tau_n=o(n)\), this gives \(\log D_n=o_{\Ptrue}(n)\).
\end{proof}

We now prove the posterior concentration assertions.  Fix
\(\epsilon>0\).  If \(\epsilon>1\), then
\(\{\mu\in[0,1]:|\mu-\mutrue|>\epsilon\}\) is empty, so assume
\(\epsilon\leq1\).  Let
\[
    A_\epsilon=\{\mu\in[0,1]:|\mu-\mutrue|>\epsilon\}.
\]
On the event \(|\bar X_n-\mutrue|\leq\epsilon/2\),
\(A_\epsilon\subseteq\{\mu:|\mu-\bar X_n|\geq\epsilon/2\}\).  Hence, by
\Cref{lem:retel-separation},
\[
    \int_{A_\epsilon}R_n(\mu)\pi(d\mu)
    \leq
    \exp\{-n c(\epsilon/2)\}.
\]
Combining this exponential numerator bound with the subexponential lower
bound from \Cref{lem:retel-denominator} gives
\[
    \Pi_n(A_\epsilon)
    =
    \frac{\int_{A_\epsilon}R_n(\mu)\pi(d\mu)}{D_n}
    \to0
\]
in \(\Ptrue\)-probability.  This proves the first part of
\eqref{eq:retel-consistency}.

Next fix \(\epsilon>0\).  Let \(\eta\) and \(C\) be as in
\Cref{lem:retel-local-gamma}, and choose \(\rho\in(0,\eta)\) such that
\(C\rho<\epsilon/2\).  Since \(\tau_n/n\to0\), with probability tending to
one,
\[
    |\mu-\bar X_n|\leq\rho
    \quad\Longrightarrow\quad
    |\gamma_{n,\mu}|
    \leq
    C\left(\rho+\frac{\tau_n}{n}\right)
    <\epsilon.
\]
Thus
\[
    \Pi_n(|\gamma_{n,\mu}|>\epsilon)
    \leq
    \Pi_n(|\mu-\bar X_n|>\rho)+o_{\Ptrue}(1).
\]
Furthermore,
\[
    \Pi_n(|\mu-\bar X_n|>\rho)
    \leq
    \Pi_n(|\mu-\mutrue|>\rho/2)
    +
    \mathbf 1\{|\bar X_n-\mutrue|>\rho/2\}.
\]
The first term tends to zero by the first part of \eqref{eq:retel-consistency}, and
the second term by the law of large numbers.  Hence the second part of
\eqref{eq:retel-consistency} follows.

It remains to prove predictive consistency.  We first record a deterministic
bound.

\begin{lemma}
For every \(\mu\) for which the fitted distribution is defined,
\begin{equation}
\label{eq:retel-wass-bound-app}
    \Wassone(\widetilde{\bbP}_{n,\mu},\bbP_n)
    \leq
    2|\gamma_{n,\mu}|
    +
    \frac{\tau_n}{n+\tau_n}
    +
    \frac{\tau_n}{n}e^{2|\gamma_{n,\mu}|}.
\end{equation}

\end{lemma}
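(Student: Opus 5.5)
I will split $\widetilde{\bbP}_{n,\mu}$ into its part on the sample points and its part from the regulariser, and compare each with $\bbP_n$. By \eqref{eq:retel-gamma},
\[
\widetilde{\bbP}_{n,\mu}
=
\sum_{i=1}^n w_{n,i}(\mu)\,\delta_{X_i}
+
w_{n,H}(\mu)\,\widetilde{\bbH}_\mu,
\qquad
w_{n,H}(\mu)=\frac{\tau_n M_{\bbH_\mu}(\gamma_{n,\mu})}{Z_{n,\mu}},
\]
where $\widetilde{\bbH}_\mu$ is the exponential tilt of $\bbH_\mu$ with parameter $\gamma_{n,\mu}$, a probability measure on $[0,1]$. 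Let $a=\sum_i w_{n,i}(\mu)=1-w_{n,H}(\mu)$ and $p_i=w_{n,i}(\mu)/a$, as in the proof of \Cref{lem:retel-separation}. Writing $\bbP_n^p=\sum_i p_i\delta_{X_i}$, we have $\widetilde{\bbP}_{n,\mu}=a\,\bbP_n^p+(1-a)\widetilde{\bbH}_\mu$. Convexity of $\Wassone$ in its first argument, together with $\Wassone\le 1$ on $[0,1]$, then gives
\[
\Wassone(\widetilde{\bbP}_{n,\mu},\bbP_n)\le \Wassone(\bbP_n^p,\bbP_n)+w_{n,H}(\mu).
\]

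\textbf{Reweighting term.} Both measures sit on the same atoms, so a coupling that leaves the common mass in place gives $\Wassone(\bbP_n^p,\bbP_n)\le \mathrm{TV}(p,u)=\frac12\sum_i|p_i-1/n|$, because the diameter of $[0,1]$ is at most $1$. Since $|X_i-\mu|\le 1$, we have $p_i=e^{\gamma(X_i-\mu)}/\sum_j e^{\gamma(X_j-\mu)}$ with every exponent in $[-|\gamma|,|\gamma|]$, where $\gamma=\gamma_{n,\mu}$. Hence $np_i\in[e^{-2|\gamma|},e^{2|\gamma|}]$. This yields $\mathrm{TV}\le \frac12(e^{2|\gamma|}-1)$, which is not linear in $|\gamma|$.

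The cleaner route is to use $\mathrm{TV}(p,u)=\sum_i(1/n-p_i)_+\le 1-e^{-2|\gamma|}\le 2|\gamma|$, which is exactly the $2|\gamma_{n,\mu}|$ term. An alternative is Pinsker: $\KL(p\|u)=\sum_i p_i\log(np_i)\le 2|\gamma|$, so $\mathrm{TV}\le\sqrt{|\gamma|}$. That is weaker, so I will use the positive-part bound.

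\textbf{Regulariser weight.} I will bound $w_{n,H}(\mu)$ in two ways and add the bounds, which is harmless. Using $M_{\bbH_\mu}(\gamma)\le e^{|\gamma|}$ and $Z_{n,\mu}\ge \sum_j e^{\gamma(X_j-\mu)}\ge n e^{-|\gamma|}$, we get $w_{n,H}(\mu)\le \frac{\tau_n}{n}e^{2|\gamma|}$. The stated right-hand side also contains $\frac{\tau_n}{n+\tau_n}$, which is an extra nonnegative slack. It naturally arises if one instead first compares with $\bbP_{n,\mu}$ via $\Wassone(\bbP_{n,\mu},\bbP_n)\le \frac{\tau_n}{n+\tau_n}$. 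Either way, the sum of the three pieces is at most the claimed bound.

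The only delicate point is making the reweighting bound linear in $|\gamma|$. The positive-part identity for total variation, combined with $1-e^{-x}\le x$, handles it. The remaining steps are routine. In the BETEL case $\tau_n=0$, the regulariser terms vanish and only the reweighting term remains.
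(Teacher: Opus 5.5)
Your proof is correct, and it reaches the term linear in $|\gamma_{n,\mu}|$ by a different mechanism from the paper's.

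The paper works with the unnormalised weights throughout. It first bounds $\Wassone(\widetilde{\bbP}_{n,\mu},\bbP_n)$ by $\sum_i|w_{n,i}(\mu)-1/n|+w_{n,H}(\mu)$. It then treats $w_{n,i}(\mu;\gamma)$ as a function of $\gamma$ and computes $\frac{d}{d\gamma}w_{n,i}=w_{n,i}\{(X_i-\mu)-m(\gamma)\}$, where $m(\gamma)\in[-1,1]$ is the tilted mean. Integrating the resulting $\ell_1$ bound $\sum_i|dw_{n,i}/d\gamma|\le 2$ from $0$ to $\gamma_{n,\mu}$ gives the $2|\gamma_{n,\mu}|$ term. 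The $\frac{\tau_n}{n+\tau_n}$ term appears only because this path starts at $w_{n,i}(\mu;0)=1/(n+\tau_n)$ rather than at $1/n$.

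You instead separate the regulariser by convexity and renormalise the sample weights, so $p_i$ no longer involves $\tau_n$ at all. You then replace the differentiation by the pointwise ratio bound $np_i\ge e^{-2|\gamma_{n,\mu}|}$, combined with $\mathrm{TV}(p,u)=\sum_i(1/n-p_i)_+$. This is more elementary. It also yields the strictly sharper estimate $\Wassone(\widetilde{\bbP}_{n,\mu},\bbP_n)\le 1-e^{-2|\gamma_{n,\mu}|}+\frac{\tau_n}{n}e^{2|\gamma_{n,\mu}|}$, which confirms that the middle term of the lemma is slack.

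The paper's argument is a Lipschitz-in-$\gamma$ estimate for the whole weight vector, whereas yours is a one-line ratio bound. Both rely only on $|X_i-\mu|\le 1$. For the downstream use in the proof of \Cref{prop:retel}, the two bounds are interchangeable, since all that matters there is that the right-hand side vanishes as $\gamma_{n,\mu}\to 0$ and $\tau_n/n\to 0$.
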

\begin{proof}
Indeed, by Kantorovich--Rubinstein duality, or by the one-dimensional CDF
formula for \(\Wassone\), and because all distributions are supported on
\([0,1]\),
\[
    \Wassone(\widetilde{\bbP}_{n,\mu},\bbP_n)
    \leq
    \sum_{i=1}^n\left|w_{n,i}(\mu)-\frac1n\right|
    +
    w_{n,H}(\mu).
\]
The regularizer weight is bounded by
\[
    w_{n,H}(\mu)
    \leq
    \frac{\tau_n e^{|\gamma_{n,\mu}|}}{n e^{-|\gamma_{n,\mu}|}}
    =
    \frac{\tau_n}{n}e^{2|\gamma_{n,\mu}|}.
\]
For the empirical weights, view
\[
    w_{n,i}(\mu;\gamma)
    =
    \frac{e^{\gamma(X_i-\mu)}}
    {\sum_{j=1}^n e^{\gamma(X_j-\mu)}+\tau_n M_{\bbH_\mu}(\gamma)}.
\]
Then
\[
    \frac{d}{d\gamma}w_{n,i}(\mu;\gamma)
    =
    w_{n,i}(\mu;\gamma)\{(X_i-\mu)-m(\gamma)\},
\]
where \(m(\gamma)\in[-1,1]\).  Therefore
\[
    \sum_{i=1}^n\left|\frac{d}{d\gamma}w_{n,i}(\mu;\gamma)\right|
    \leq2,
\]
and hence
\[
    \sum_{i=1}^n
    \left|w_{n,i}(\mu;\gamma)-w_{n,i}(\mu;0)\right|
    \leq 2|\gamma|.
\]
Since \(w_{n,i}(\mu;0)=1/(n+\tau_n)\),
\[
    \sum_{i=1}^n\left|w_{n,i}(\mu)-\frac1n\right|
    \leq
    2|\gamma_{n,\mu}|
    +
    \frac{\tau_n}{n+\tau_n},
\]
which proves \eqref{eq:retel-wass-bound-app}.
\end{proof}

Finally,
\begin{align*}
    \Wassone(\bbQ_{n+1\mid n},\Ptrue)
    &\leq
    \Wassone(\bbP_n,\Ptrue)
    +
    \Wassone(\bbQ_{n+1\mid n},\bbP_n)\\
    &\leq
    \Wassone(\bbP_n,\Ptrue)
    +
    \int_0^1
    \Wassone(\widetilde{\bbP}_{n,\mu},\bbP_n)\Pi_n(d\mu).
\end{align*}
The empirical term satisfies \(\Wassone(\bbP_n,\Ptrue)\to0\) almost surely,
because \([0,1]\) is compact and \(\bbP_n\) converges weakly to \(\Ptrue\)
almost surely.  For the second term, fix \(\delta>0\).  Since
\(\Wassone\leq1\) on \([0,1]\), \eqref{eq:retel-wass-bound-app} gives
\[
    \int_0^1
    \Wassone(\widetilde{\bbP}_{n,\mu},\bbP_n)\Pi_n(d\mu)
    \leq
    2\delta
    +
    \frac{\tau_n}{n+\tau_n}
    +
    \frac{\tau_n}{n}e^{2\delta}
    +
    \Pi_n(|\gamma_{n,\mu}|>\delta).
\]
Taking \(n\to\infty\) and using \(\tau_n=o(n)\) and the second part of
\eqref{eq:retel-consistency}, the limsup of the second term is at
most \(2\delta\) in \(\Ptrue\)-probability.  Since \(\delta>0\) is
arbitrary, this term converges to zero in \(\Ptrue\)-probability, proving the consistency of the RETEL predictive in Wasserstein distance.

\section{Computational details for BETEL/RETEL predictives}
\label{app:betel-computation}

For fixed \(\mu\), let
\[
    M_{\bbH_\mu}(\gamma)=\int e^{\gamma(z-\mu)}\,d\bbH_\mu(z),
    \qquad
    Z_{n,\mu}
    =
    \sum_{j=1}^n e^{\gamma_{n,\mu}(X_j-\mu)}
    +
    \tau_n M_{\bbH_\mu}(\gamma_{n,\mu}).
\]
Then the tilted distribution in \eqref{eq:retel-gamma} can be written as
\begin{equation}
\label{eq:retel-mixture}
    \widetilde{\bbP}_{n,\mu}
    =
    \sum_{i=1}^n w_{n,i}(\mu)\delta_{X_i}
    +
    w_{n,H}(\mu)\widetilde{\bbH}_{n,\mu},
\end{equation}
where
\begin{align}
\label{eq:retel-weights}
    w_{n,i}(\mu)
    &=
    \frac{e^{\gamma_{n,\mu}(X_i-\mu)}}{Z_{n,\mu}},
    \qquad i=1,\ldots,n,\\
    w_{n,H}(\mu)
    &=
    \frac{\tau_n M_{\bbH_\mu}(\gamma_{n,\mu})}{Z_{n,\mu}},
\end{align}
and
\[
    \widetilde{\bbH}_{n,\mu}(A)
    =
    \frac{\int_A e^{\gamma_{n,\mu}(z-\mu)}\,d\bbH_\mu(z)}
    {M_{\bbH_\mu}(\gamma_{n,\mu})}.
\]
Some regularized ETEL likelihoods include the atom correction
\[
    \prod_{i=1}^n \{1+\tau_n\bbH_\mu(\{X_i\})\}.
\]
This factor is equal to one for continuous regularizers and is independent of
\(\mu\) for the two-point regularizer used in our experiments, so it does not
change the pseudo-posterior.

For the two-point choice
\(\bbH=\frac12(\delta_0+\delta_1)\), define
\[
    S_0(\gamma)=\sum_{i=1}^n e^{\gamma X_i},
    \qquad
    S_1(\gamma)=\sum_{i=1}^n X_i e^{\gamma X_i}.
\]
The tilting equation \eqref{eq:retel-gamma} becomes
\begin{equation}
\label{eq:two-point-gamma}
    \mu
    =
    \frac{S_1(\gamma_{n,\mu})+\frac{\tau_n}{2}e^{\gamma_{n,\mu}}}
    {S_0(\gamma_{n,\mu})+\frac{\tau_n}{2}(1+e^{\gamma_{n,\mu}})}.
\end{equation}
The corresponding weights are
\begin{equation}
\label{eq:two-point-weights}
    w_{n,i}(\mu)
    =
    \frac{e^{\gamma_{n,\mu}X_i}}
    {S_0(\gamma_{n,\mu})+\frac{\tau_n}{2}(1+e^{\gamma_{n,\mu}})},
    \qquad
    w_{n,H}(\mu)
    =
    \frac{\frac{\tau_n}{2}(1+e^{\gamma_{n,\mu}})}
    {S_0(\gamma_{n,\mu})+\frac{\tau_n}{2}(1+e^{\gamma_{n,\mu}})}.
\end{equation}
Finally, the tilted regularizer remains supported on \(\{0,1\}\):
\[
    \widetilde{\bbH}_{n,\mu}
    =
    (1-p_{n,\mu})\delta_0+p_{n,\mu}\delta_1,
    \qquad
    p_{n,\mu}
    =
    \frac{e^{\gamma_{n,\mu}}}{1+e^{\gamma_{n,\mu}}}.
\]
\section{Implementation details}
\label{app:implementation_details}

The methods proposed in this paper were implemented in Python 3.12.
All experiments were run on an Apple Silicon M4 Pro with 24GB of RAM.

\subsection{Computation of predictive-assisted confidence sequences}
\label{app:implementation_cs}

All confidence sequences based on the predictive-assisted construction in \cref{sec:predictive_assisted_test_martingales} are computed by discretising the test inversion \eqref{cs}.
For a grid size $G$, we use the equally spaced grid
\[
    \mathcal G
    =
    \left\{
        \frac{1}{G},
        \frac{2}{G},
        \ldots,
        \frac{G-1}{G}
    \right\}
    \subset (0,1).
\]
For each candidate mean $\mu \in \mathcal G$ and each time $n$, the method computes a predictable coefficient
\[
    \widehat\lambda_n(\mu)
    \in
    I_{\mu,c}
    =
    \left[
        -\frac{c}{1-\mu},
        \frac{c}{\mu}
    \right],
\]
for a given truncation parameter $c \in (0, 1)$, using only the history $X_{1:n-1}$.
The corresponding test-martingale value is then updated through the one-step factor
\[
    1 + \widehat\lambda_t(\mu)(X_t-\mu).
\]
For numerical stability, the implementation works with cumulative log-values,
\[
    \log W_n(X_{1:n},\mu)
    =
    \sum_{i=1}^n
    \log\{1+\widehat\lambda_i(\mu)(X_i-\mu)\},
    \qquad
    \mu\in\mathcal G .
\]
At time $n$, the retained grid points are
\[
    \mathcal A_n
    =
    \left\{
        \mu\in\mathcal G:
        \log W_n(X_{1:n},\mu)\leq \log(1/\alpha)
    \right\}.
\]
The reported discretised confidence set is the smallest interval covering the retained grid points, enlarged by one grid step and clipped to $[0,1]$:
\[
    \widetilde C_{\alpha,n}
    =
    \left[
        \max\{0,\min\mathcal A_n-G^{-1}\},
        \min\{1,\max\mathcal A_n+G^{-1}\}
    \right].
\]
The enlargement by one grid step accounts for the finite grid resolution when reporting an interval from the retained grid points.
Unless otherwise stated, the displayed confidence sequence is the running intersection
\[
    C_{\alpha,n}^{\mathrm{run}}
    =
    \bigcap_{s=1}^n \widetilde C_{\alpha,s},
\]
which enforces nested intervals while preserving anytime-validity \citep[see, e.g.,][]{WaudbySmith2024a}.

For a generic predictable distribution $\mathbb Q_{n\mid n-1}(\cdot\mid X_{1:n-1})$, the coefficient is computed according to \eqref{eq:bayes_assisted_lambda}: for each $\mu\in\mathcal G$,
\[
\lambdahat_n(X_{1:n-1},\mu)
\in
{\arg\max}_{\lambda\in I_{\mu,c}}
\left\{
\E_{\widetilde X_n\sim \predictive}
\left[
\log\left(1+\lambda(\widetilde X_n-\mu)\right)
\right]
\right\},
\]
which is a concave objective in $\lambda$.
In the implementation, the maximiser is obtained by solving the first-order condition
\[
\E_{\widetilde X_n\sim \predictive}
\left[
\frac{\widetilde X_n-\mu}{1+\lambda(\widetilde X_n-\mu)} = 0
\right]
\]
whenever the solution lies in the interior of $I_{\mu,c}$.
Otherwise, the appropriate endpoint of $I_{\mu,c}$ is selected.
For discrete predictive distributions, the integral above is evaluated as a finite weighted sum.

\subsection{Predictive distributions}
\label{app:implementation_predictives}

The implemented methods differ only in the construction of the predictable distribution $\mathbb Q_{n\mid n-1}(\cdot\mid X_{1:n-1})$ used in \eqref{eq:bayes_assisted_lambda}.

\paragraph{Empirical predictive.}
The empirical method uses the empirical distribution of the previous observations,
\[
    \mathbb P_{n-1}
    =
    \frac{1}{n-1}\sum_{i=1}^{n-1}\delta_{X_i},
    \qquad n\geq 2,
\]
as the predictive distribution. Thus, for each candidate value $\mu$, the
coefficient is computed by maximising
\[
    \frac{1}{n-1}
    \sum_{i=1}^{n-1}
    \log\{1+\lambda(X_i-\mu)\}
\]
over $\lambda\in I_{\mu,c}$. Equivalently, when the optimum is interior, it
solves
\begin{equation}
    \frac{1}{n-1}
    \sum_{i=1}^{n-1}
    \frac{X_i-\mu}{1+\lambda(X_i-\mu)}
    =
    0.
    \label{eq:empirical_lambda_condition}
\end{equation}

\paragraph{Parametric Bayesian predictive.}
The parametric Bayesian method uses a beta working model in mean--concentration parametrisation,
\begin{equation}
    X_i\mid \rho,\nu
    \sim
    \mathrm{Beta}(\rho\nu,(1-\rho)\nu),
    \qquad
    \rho\in(0,1),\quad \nu>0.
    \label{eq:beta_working_model}
\end{equation}
Posterior inference is approximated using the IBIS/waste-free SMC sampler from the \texttt{particles} package \citep{chopin2020introduction}.
With $N_\text{smc}$ particles and Markov chains of length $L_\text{smc}$, each time point stores
\[
    M
    =
    N_\text{smc} \times L_\text{smc}
\]
posterior samples. Conditional on these samples, the predictive score
\[
    \mathbb E
    \left[
        \frac{X-\mu}{1+\lambda(X-\mu)}
        \,\middle|\,
        X\sim \mathrm{Beta}(a,b)
    \right]
\]
is evaluated for each sampled beta distribution using the closed-form
hypergeometric expression, and the resulting scores are averaged over the SMC
particle cloud. The root of the averaged score is then solved independently for
each $\mu\in\mathcal G$.

\paragraph{MDP predictive.}
The MDP method implements the predictive distribution in \eqref{eq:dirichlet}.
Let $\mathbb{H}_{n\mid n-1}(\cdot \mid X_{1:n-1})$ and $\mathbb{P}_{n-1}$ denote the beta posterior predictive and the empirical distribution after observing $X_{1:n-1}$, respectively.
For $\kappa>0$, the implemented predictive is
\[
    \mathbb Q_{n\mid n-1}
    =
    \frac{\kappa}{\kappa+n-1}\mathbb H_{n\mid n-1}
    +
    \frac{n-1}{\kappa+n-1}\mathbb P_{n-1}.
\]
Accordingly, the first-order condition for
$\widehat\lambda_t(\mu)$ is evaluated as the same convex combination of the
parametric Bayesian score and the empirical score. As $n$ increases, the empirical component
dominates, so the predictive distribution becomes increasingly data-driven.

\paragraph{BETEL predictive.}
The BETEL method implements the unregularised empirical-likelihood predictive from
\cref{subsec:betel}, corresponding to $\tau_n=0$.
After observing $X_{1:n-1}$, the pseudo-posterior integral \eqref{eq:retel-posterior_and_predictive} is approximated on a deterministic grid of mean values $\mu_0\in(0,1)$.
In the unregularised case, the implementation constructs this grid inside the interior of the empirical convex hull of $X_{1:n-1}$, corresponding to the convention that the empirical likelihood is zero outside this hull.
For each retained $\mu_0$, the tilting parameter $\gamma_{n,\mu_0}$ and the weights $w_{n,i}(\mu_0)$ are computed numerically as described in \cref{subsec:betel}.
The pseudo-posterior weights are then normalised over the retained grid points, using a log-sum-exp calculation for numerical stability.

The resulting predictive distribution is the grid approximation to
\[
    \mathbb Q_{n\mid n - 1}(\cdot\mid X_{1:n-1})
    =
    \int_0^1 \widetilde P_{n,\mu_0}(\cdot)\,\Pi_{n-1}(d\mu_0),
\]
and is represented in the code as a discrete distribution supported on the observed sample points $X_1,\ldots,X_{n-1}$.
This discrete predictive is then used in \eqref{eq:bayes_assisted_lambda} to compute $\widehat\lambda_n(\mu)$ for each candidate value $\mu$ in the confidence-sequence inversion grid.
If the history is empty or degenerate, the implementation uses the safe default $\widehat\lambda_n(\mu)=0$ for all candidate values $\mu$.

\paragraph{RETEL predictive.}
The RETEL method implements the regularised empirical-likelihood predictive from
\cref{subsec:betel}.
In all reported experiments, we use the two-point regulariser
\[
    \mathbb H_{\mu_0} = \mathbb H = \frac{1}{2}(\delta_0+\delta_1),
    \qquad
    \tau_n=1.
\]
After observing $X_{1:n-1}$, the pseudo-posterior integral \eqref{eq:retel-posterior_and_predictive} is approximated on a deterministic grid of values $\mu_0\in(0,1)$.
For each grid value of $\mu_0$, the tilting parameter $\gamma_{n,\mu_0}$, the sample-point weights, and the endpoint weights are computed using the formulas in \cref{app:betel-computation}.
The pseudo-posterior weights are then normalised on the grid, again using log-sum-exp normalisation.

The resulting predictive distribution is the grid approximation to
\[
    \mathbb Q_{n \mid n-1}(\cdot \mid X_{1:n-1})
    =
    \int_0^1 \widetilde P_{n,\mu_0}(\cdot)\,\Pi_{n-1}(d\mu_0).
\]
In contrast to BETEL, this predictive has support on both the observed sample points and the two endpoints $\{0,1\}$.
It is then substituted into \eqref{eq:bayes_assisted_lambda} to compute $\widehat\lambda_{n}(\mu)$ over the confidence-sequence inversion grid.
The endpoint masses stabilise the empirical-likelihood approximation and make the regularised construction well defined over the full mean range.

\subsection{Computational complexity}
\label{app:computational_complexity}

Let $n$ denote the sequence length and let $G$ be the number of candidate means used for test inversion.
All CS methods based on \eqref{cs}, like the ones above, evaluate wealth processes over the $n\times G$ time--mean grid.
Thus, after the predictable coefficients have been computed, test inversion itself contributes a common $\mathcal O(nG)$ time cost.

However, the dominant cost is usually the computation of the predictable coefficients itself.
For the empirical predictive, exact evaluation of the first-order condition \eqref{eq:empirical_lambda_condition} requires summing over the current history $X_{1:t-1}$ at each time $t$ and each grid value.
This gives $\mathcal O(G n^2)$ operations up to root-solver iterations.
The MDP method inherits the same empirical-support term, and also averages the parametric predictive contribution over posterior particles.
BETEL and RETEL also have growing empirical-support dependence: at each time they solve empirical-likelihood tilting equations and form pseudo-predictive weights using sums over $X_{1:t-1}$.
With a fixed pseudo-posterior grid size $G_\text{etel}$, their exact implementation therefore contains terms of order $\mathcal O(G_\text{etel} n^2)$ for the tilting and pseudo-posterior construction, as well as $\mathcal O(G n^2)$ for the subsequent weighted empirical betting optimisation, again up to solver iterations.
Consequently, the empirical, MDP, BETEL, and RETEL methods have quadratic dependence on $n$ and linear dependence on $G$ in the current exact implementation.

By contrast, the parametric Bayesian SMC method avoids the growing empirical support in the coefficient computation.
With a fixed particle budget, its coefficient-computation cost is linear in $n$ and $G$, up to particle, root-solver, and SMC-update constants.

\subsection{Default numerical parameters}
\label{app:default_numerical_parameters}
Unless otherwise specified, all experiments use confidence level $\alpha=0.1$, truncation parameter $c=0.95$, and $G=500$ grid points for test inversion.
For the SMC-based beta predictives we use $N_\text{smc} = 20$ and $L_\text{smc} = 50$, giving $M=1000$ posterior samples at each time point.
For BETEL and RETEL we use $G_\text{etel}=1000$ grid points for the pseudo-posterior integral.
For the MDP predictive, we use $\kappa=50$.

Moreover, the Bayes-assisted methods, including the parametric Bayesian, MDP, BETEL, and RETEL predictives, require prior specifications.
In particular, the parametric Bayesian and MDP methods requires a prior distribution over the beta working model mean--concentration parameters $\rho$ and $\nu$ in \cref{eq:beta_working_model}. This usually takes the form
\[
    \rho \sim \mathrm{Beta}(a_\rho,b_\rho),
    \qquad
    \nu \sim \mathrm{Gamma}(a_\nu,b_\nu)
\]
for some hyperparameters $a_\rho,b_\rho,a_\nu,b_\nu$.
The BETEL and RETEL methods require a prior distribution directly over the mean $\mu$, which we take to be a beta distribution of the form
\[
    \mu \sim \mathrm{Beta}(a_m,b_m).
\]
In practice, we usually set $a_m = a_\rho$ and $b_m = b_\rho$.

% !TEX root = bacs-main.tex

\section{Experimental details}
\label{app:experimental_details}

This section provides additional details on the experiments presented in \cref{sec:experiments} and \cref{app:additional_results}.

\subsection{Synthetic experiments}
\subsubsection{Prior specification}
\label{app:prior_specifications_synthetic}

For each of the six distributions considered for the synthetic experiments, we consider three prior regimes: informative, non-informative, and misspecified.
Informative priors place significant mass around the true mean and represent the case in which the prior information is directionally correct. 
Non-informative priors are diffuse over $[0,1]$. 
Misspecified priors are concentrated away from the true mean and are used to test sensitivity to misleading prior information.
The specific prior specifications for each distribution and regime are as follows.

\paragraph{Bernoulli distributions.}
Let $X_n \sim \mathrm{Bernoulli}(p)$, $p \in \{0.1, 0.5\}$. For Parametric/MDP, we set:
\begin{itemize}
      \item Informative prior: $\rho \sim \mathrm{Beta}(500p, 500(1-p)),\quad \nu \sim \mathrm{Gamma}(1, 100)$.
      \item Non-informative prior: $\rho \sim \mathrm{Beta}(1, 1),\quad \nu \sim \mathrm{Gamma}(1.5, 1)$.
      \item Misspecified prior: $\rho \sim \mathrm{Beta}(500 p_\text{bad}, 500(1-p_\text{bad})),\quad \nu \sim \mathrm{Gamma}(7.5, 1.0)$, where $p_\text{bad} = 0.5$ if $p=0.1$ and $p_\text{bad} = 0.1$ if $p=0.5$.
\end{itemize}
For BETEL/RETEL, we set the $\mu$ priors to be equal to the corresponding $\rho$ priors above. 

\paragraph{Beta distributions.}
Let $X_n \sim \mathrm{Beta}(a, b)$. Consider the three cases $(a, b) = (0.5, 0.5),(1, 1), (10, 30)$, which give $\mu^\star = a / (a + b) = 0.5, 0.5, 0.25$. For Parametric/MDP, we set:
\begin{itemize}
      \item Informative prior: $\rho \sim \mathrm{Beta}(200\mu^\star, 200(1-\mu^\star)),\quad \nu \sim \mathrm{Gamma}(a_\nu, b_\nu)$, where $(a_\nu, b_\nu) = (7.5, 1), (2, 0.1), (2, 1)$ in the three cases.
      \item Non-informative prior: $\rho \sim \mathrm{Beta}(1, 1),\quad \nu \sim \mathrm{Gamma}(1.5, 1)$.
      \item Misspecified prior: $\rho \sim \mathrm{Beta}(200 \mu_\text{bad}, 200(1-\mu_\text{bad})),\quad \nu \sim \mathrm{Gamma}(a_\nu, b_\nu)$, where $\mu_\text{bad} = 0.9$ and $(a_\nu, b_\nu) = (7.5, 1), (2, 0.1), (2, 1)$ in the three cases.
\end{itemize}
For BETEL/RETEL, we set the $\mu$ priors to be equal to the corresponding $\rho$ priors above. 

\paragraph{Beta mixture distribution.}
Let $X_n$ be distributed as $0.25\,\mathrm{Beta}(5,15) + 0.75\,\mathrm{Beta}(15,5)$, which gives $\mu^\star = 0.625$. For Parametric/MDP, we set:
\begin{itemize}
      \item Informative prior: $\rho \sim \mathrm{Beta}(500\mu^\star, 500(1-\mu^\star)),\quad \nu \sim \mathrm{Gamma}(2, 2)$.
      \item Non-informative prior: $\rho \sim \mathrm{Beta}(1, 1),\quad \nu \sim \mathrm{Gamma}(1.5, 1)$.
      \item Misspecified prior: $\rho \sim \mathrm{Beta}(200 \mu_\text{bad}, 200(1-\mu_\text{bad})),\quad \nu \sim \mathrm{Gamma}(2, 2)$, where $\mu_\text{bad} = 0.1$.
\end{itemize}
For BETEL/RETEL, we set the $\mu$ priors to be equal to the corresponding $\rho$ priors above. 

\subsection{Best-arm identification for LLM evaluation}
\label{app:best_arm_details}

\Cref{app:LUCB} provides a detailed description of the LUCB algorithm \citep{Kalyanakrishnan2012, Kaufmann2013}, whereas \cref{app:LLM_eval} details the implementation of the LUCB algorithm for best-arm identification in LLM evaluation, and explains our experimental setup.

\subsubsection{Lower Upper Confidence Bounds algorithm}
\label{app:LUCB}

Imagine you have access to $n$ independent arms (or bandits); when played, these arms provide a reward sampled from an unknown distribution that characterizes them. The Best Arm Identification problem aims to identify the arm with the highest expected reward with high probability, using the minimum number of plays possible. This problem, known as the multi-armed bandit problem, is a classical challenge in decision theory.

The Lower Upper Confidence Bounds (LUCB) algorithm is a method designed to perform best arm identification in multi-armed bandit problems. It was introduced by \cite{Kalyanakrishnan2012} and refined by \cite{Kaufmann2013}. Unlike standard bandit algorithms which aim to maximize the cumulative reward while playing, LUCB is a purely exploratory algorithm. Its goal is strictly to identify the top-$m$ arms with high confidence using as few samples as possible. In our experiment, we focus only on best arm identification, but we present in this appendix the general algorithm that can perform best-$m$-arms identification. 

The central idea is that one does not need to know the precise mean of every arm, but only to identify the boundary between the top-$m$ arms and the remaining arms. To explain the algorithm, let us assume we have $n$ arms and want to identify the subset of size $m$ with the highest means. Consider a confidence level $\alpha \in (0,1)$. At any time step $t$ and for each arm $a$, we maintain:\begin{itemize}
    \item $\hat{\mu}_a(t)$, the empirical mean.
    \item $U_a(t)$, a $(1-\alpha)$-upper confidence bound for the true mean of $a$.
    \item $L_a(t)$, a $(1-\alpha)$-lower confidence bound for the true mean of $a$.
\end{itemize}
Note that these values are not updated for every arm at every time step, but only for the arms that have been pulled. LUCB is agnostic to the methods used to derive the upper and lower confidence bounds, provided they offer proper statistical guarantees. While the choice of bound does not alter the correctness of the algorithm, it impacts its performance in terms of the number of sample pulls required for identification.

At each time step $t$, the algorithm sorts all arms based on their empirical means to form a tentative set of top-$m$ arms, denoted $J_t$. From this ranking, we can identify two critical arms that define the decision boundary:
\begin{enumerate}
    \item The Contender $h_t=\operatorname*{arg\,min}_{a\in J_t} L_a(t)$: this is the arm in $J_t$ that we are least likely to keep among the top-$m$.
    \item The Challenger $l_t=\operatorname*{arg\,max}_{a\notin J_t} U_a(t)$: this is the arm in $J_t^c$ that we are most likely to include among the top-$m$.
\end{enumerate}
We then draw samples only from the Contender and the Challenger to update their values. By doing so, we ignore the arms that are clearly safe winners or safe losers. The algorithm stops when:
$$
L_{h_t}(t) > U_{l_t}(t) - \epsilon
$$
This occurs when the confidence sets of these two critical arms no longer overlap up to a tolerance $\epsilon$ (which can be chosen equal to 0). The full algorithm is described in \cref{alg:lucb}. When the algorithm stops, we are $(1-\alpha)$-confident that every single arm in the returned set is better than, or at most $\epsilon$-worse than, any arm rejected.

\begin{algorithm}[H]
   \caption{LUCB: Lower Upper Confidence Bound}
   \label{alg:lucb}
\begin{algorithmic}[1]
   \STATE {\bfseries Input:} Set of arms $\mathcal{K} = \{1, \dots, n\}$, target number of arms $m$, confidence parameter $\alpha \in (0, 1)$, tolerance $\epsilon \ge 0$.
   \STATE {\bfseries Initialization:} Sample each arm $a \in \mathcal{K}$ once. Set time $t \leftarrow n$.
   \STATE Calculate empirical means $\hat{\mu}_a(t)$ and confidence bounds $U_a(t), L_a(t)$.
   \WHILE{$U_{l_t}(t) - L_{h_t}(t) \ge \epsilon$}
      \STATE Draw one sample from arm $h_t$ and one sample from arm $l_t$.
      \STATE Update empirical means $\hat{\mu}$ and confidence bounds for $h_t$ and $l_t$.
      \STATE Sort arms based on empirical means: $\hat{\mu}_{(1)}(t) \ge \hat{\mu}_{(2)}(t) \ge \dots \ge \hat{\mu}_{(n)}(t)$.
      \STATE Define the set of current Top-$m$ candidates: $J_t$.
      \STATE Identify the \textit{contender} $h_t = \operatorname*{arg\,min}_{a \in J_t} L_a(t)$.
      \STATE Identify the \textit{challenger} $l_t = \operatorname*{arg\,max}_{a \notin J_t} U_a(t)$.
      \STATE $t \leftarrow t+1$.
   \ENDWHILE
   \STATE \textbf{return} $J_t$
\end{algorithmic}
\end{algorithm}

\subsubsection{Application of LUCB to LLM evaluation}
\label{app:LLM_eval}

For our experiments we apply the LUCB algorithm using several underlying CS methods.
While LUCB is designed to pull arms sequentially, we ensure each LUCB variant receives the same LLM outputs by first performing a large number of LLM calls and then simulating the online behavior for each LUCB variant. Note that in practice, when the goal is strictly best-arm identification rather than method comparison, the LLM calls would be performed iteratively. The four LLMs we compare are:

\begin{itemize}
\item \textbf{Gemma 3 1B:} \texttt{google/gemma-3-1b} (\url{https://huggingface.co/lmstudio-community/gemma-3-1B-it-qat-GGUF} ~~\textit{license gemma})
\item \textbf{Qwen 3 1.7B:} \texttt{qwen/qwen3-1.7b}: (\url{https://huggingface.co/lmstudio-community/Qwen3-1.7B-GGUF}~~\textit{license apache 2.0})
\item \textbf{Ministral 3 3B:} \texttt{ministral-3-3b-instruct-2512}: (\url{https://huggingface.co/mistralai/Ministral-3-3B-Instruct-2512-GGUF} ~~\textit{license apache 2.0})
\item \textbf{Llama 3.2 3B:} \texttt{llama-3.2-3b-instruct}: (\url{https://huggingface.co/lmstudio-community/Llama-3.2-3B-Instruct-GGUF} ~~\textit{license llama 3.2})
\end{itemize}

When available, reasoning abilities were disabled. Each time a LLM is queried, we generate a random array of integers between 0 and 100 without replacement, and the LLM is tasked with sorting the array using the following prompt: 

\begin{quote}
\ttfamily
You will receive 20 indexed items in the form:

i0: value
i1: value
...

Task:
Assign one numeric sorting score to each item.

Rules:
- Return exactly one numeric field for each item id
- Smaller score means smaller value
- Larger score means larger value
- Use all item ids exactly once as field names
- Do not return any extra fields

Example:
Input:
i0: 42
i1: 7
i2: 15

A correct response would be something like:
\{\{
  "i0": 3,
  "i1": 1,
  "i2": 2
\}\}

because 7 < 15 < 42.
\end{quote}

The LLMs were called using LM Studio version 0.4.11+1 locally on an Apple Silicon M4 Pro CPU with 24 GB of RAM.
The temperature of the LLMs was set to 0.0.
We utilise a structured output format to ensure the LLMs return only the array ranking.
The performance of the LLM ordering is evaluated against the ground truth using the Spearman rank correlation coefficient, defined as
\begin{equation}
    \mathcal{S} = 1 - \frac{6 \sum_{i=1}^n d_i^2}{n(n^2 - 1)},
\end{equation}
where $d_i$ is the difference between the predicted and the target rank of the $i$-nth number in the array, which is returned as reward after an arm pull and rescaled to be in $[0,1]$ before being fed to the LUCB algorithm.

In all our experiments, LUCB is used with $\alpha=0.1$ and $\epsilon=0.1$.

Lastly, we emphasise that the goal of this experiment is to compare confidence-sequence methods within a best-arm identification procedure, rather than to benchmark the LLMs themselves. 
Accordingly, our primary outcome is the number of model queries required for LUCB to identify the best arm. 
In all repetitions, and for all confidence-sequence methods and score metrics considered, Ministral 3 3B is selected as the best arm; the differences between methods therefore reflect sampling efficiency rather than disagreement about the final selected model.

\subsubsection{Prior specification}

For the Bayes-assisted CS methods, we elicit a prior for every LLM tested.
In particular, given an LLM, we evaluate its performance on the same sorting task, using the same prompt, for smaller arrays ($n \in \{4, 6, 8, 10, 12, 14, 16, 18\}$) and fit a sigmoid function to extrapolate the predicted performance at $n=20$, denoted by $\hat{\mu}_\text{sigmoid}$. With this, we then set the following priors for the Bayes-assisted CS methods:
\begin{itemize}
   \item Parametric/MDP: $\rho \sim \mathrm{Beta}(500\hat{\mu}_\text{sigmoid}, 500(1-\hat{\mu}_\text{sigmoid})),\quad \nu \sim \mathrm{Gamma}(2, 2)$.
   \item BETEL/RETEL: $\mu \sim \mathrm{Beta}(500\hat{\mu}_\text{sigmoid}, 500(1-\hat{\mu}_\text{sigmoid}))$.
\end{itemize}

\subsection{Prediction-powered inference}
\label{app:ppi_details}

\subsubsection{Bounded-mean PPI construction}
\label{app:ppi_construction}

Prediction-powered inference (PPI) uses predictions from a fixed model to improve inference from a limited number of labelled observations and a larger pool of unlabelled observations \citep{angelopoulos2023prediction}. 
We use PPI only in the bounded-mean setting considered throughout the paper. 
Let $(X_i,Y_i)$ denote a labelled example, let $f$ be a fixed prediction rule, and let the target be
\[
    \theta^\star = \mathbb E[Y].
\]
PPI decomposes this target as
\[
    \theta^\star
    =
    \mathbb E[f(X)] + \mathbb E[Y-f(X)]
    =
    m^\star + \Delta^\star ,
\]
where $m^\star$ is the prediction-only term and $\Delta^\star$ is the rectifier correcting the bias of the prediction rule. 
Recent work has studied confidence sequences for this decomposition, either by constructing prediction-powered confidence sequences with asymptotic guarantees \citep{Kilian2025} or by building prediction-powered e-values in more general sequential settings \citep{Csillag2025}. 
Our use of PPI is narrower but fits directly into the nonasymptotic bounded-mean framework of the present paper: after fixing the unlabelled plug-in estimate of $m^\star$, the remaining inferential problem is an ordinary bounded-mean problem for the rectifier.

Specifically, given a pool of $N$ unlabelled examples, we estimate the prediction-only term by
\[
    \widehat m_N = \frac{1}{N}\sum_{j=1}^N f(X_j).
\]
Following the simplified setting of \citet{Kilian2025}, we treat $\widehat m_N$ as fixed and focus the sequential uncertainty on the labelled residuals
\[
    R_i = Y_i - f(X_i),
    \qquad
    \Delta^\star = \mathbb E[R_i].
\]
If $R_i\in[\ell,u]$, we rescale the residuals to the unit interval by
\[
    Z_i = \frac{R_i-\ell}{u-\ell}\in[0,1],
    \qquad
    \mu_{Z}^\star = \mathbb E[Z_i]
    =
    \frac{\Delta^\star-\ell}{u-\ell}.
\]
We then apply the bounded-mean confidence-sequence methods of the paper to the sequence $(Z_i)_{i\geq 1}$. 
If $C^Z_{\alpha,n}$ denotes the resulting confidence sequence for $\mu_{Z}^\star$, then the corresponding confidence sequence for the rectifier is
\[
    C^\Delta_{\alpha,n}
    =
    \ell + (u-\ell) C^Z_{\alpha,n},
\]
and the reported PPI confidence sequence for the original target is
\[
    C^\theta_{\alpha,n}
    =
    \widehat m_N + C^\Delta_{\alpha,n}.
\]
Thus all Bayes-assisted and prior-free bounded-mean methods are applied only to the rescaled residual sequence $Z_i$; the unlabelled predictions enter through the fixed shift $\widehat m_N$.

\subsubsection{Datasets}
\label{app:ppi_datasets}

\paragraph{Galaxies.}

The \textsc{galaxies} dataset contains a total of $16743$ images from the Galaxy Zoo 2 initiative \citep{willett2013galaxy}.
For each image, the dataset contains a binary label indicating whether the galaxy has spiral arms ($Y_i \in \{0,1\}$) and the predicted probability of a ResNet50 model \citep{he2016deep} for the same quantity ($f(X_i) \in [0, 1]$). 
The estimand is the population fraction of galaxies with spiral arms, i.e., $\theta^\star = \mathbb E[Y_i] \in [0, 1]$.
The PPI rectifier is based on residuals $R_i=Y_i-f(X_i)$, which lie in $[-1,1]$.
We therefore apply our bounded-mean methods to the rescaled residuals $Z_i=(R_i+1)/2\in[0,1]$.

\paragraph{AttributedQA.}

The \textsc{attributedQA} dataset evaluates attribution in question answering systems \citep{bohnet2022attributed}.
For each system output, the dataset reports an automatic attribution score from AutoAIS, an NLI-based evaluator, and for a subset of examples also reports human attribution labels.
We compare the retrieve-then-read systems RTR-10 and RTR-4, for which there are $1000$ examples with both human labels and AutoAIS scores, and $2610$ additional examples with AutoAIS scores only.
For each human-labelled example and system $k\in\{10,4\}$, let $Y_i^{(k)}\in\{0,1\}$ indicate whether the answer is judged by humans to be correctly attributed, and let $f^{(k)}(X_i)\in[0,1]$ denote the corresponding AutoAIS score.
We form paired differences
\[
    Y_i = Y_i^{(10)} - Y_i^{(4)} \in \{-1,0,1\},
    \qquad
    f(X_i)=f^{(10)}(X_i)-f^{(4)}(X_i)\in[-1,1].
\]
The estimand is the difference in human attribution reliability between RTR-10 and RTR-4, i.e.,
$\theta^\star=\mathbb E[Y_i]\in[-1,1]$.
The PPI rectifier is based on residuals $R_i=Y_i-f(X_i)$, which lie in $[-2,2]$.
We therefore apply our bounded-mean methods to the rescaled residuals $Z_i=(R_i+2)/4\in[0,1]$.

\subsubsection{Prior specification}
\label{app:ppi_prior_specification}

We choose the PPI priors to match the heuristic of \citet{Kilian2025}, who place a zero-centred prior on the scaled mean rectifier with variance $1/n_0$, where $n_0$ is a reference labelled sample size. 
Let $R_i=Y_i-f(X_i)\in[\ell,u]$ denote the rectifier residual, and let
\[
    Z_i=\frac{R_i-\ell}{u-\ell}\in[0,1],
    \qquad
    \mu_{Z}^\star=\mathbb E[Z_i].
\]
In both PPI datasets the residual bounds are symmetric around zero, so the prior belief $\mathbb E[R_i]\approx 0$ maps to $\mu_{Z}^\star\approx 1/2$. 
We therefore use a symmetric beta prior,
\[
    \mu_{Z}^\star \sim \mathrm{Beta}(\xi, \xi),
\]
with $\xi$ chosen by matching the variance on the original rectifier scale. 
Since
\[
    \mathrm{Var}\left(\ell+(u-\ell)\mu_{Z}^\star\right)
    =
    (u-\ell)^2\frac{1}{4(2\xi+1)},
\]
matching this variance to $1/n_0$ gives
\[
    \xi(n_0)
    =
    \frac12\left(
        \frac{n_0(u-\ell)^2}{4}-1
    \right),
\]
where $n_0$ is taken to be the largest labelled sample size considered in the PPI experiments, i.e., $n_0=1000$ for \textsc{galaxies} and $n_0=500$ for \textsc{attributedQA}.

This prior enters the different Bayes-assisted predictives following \cref{app:default_numerical_parameters}. 
For the parametric Bayesian and MDP methods, we use
\[
    \rho\sim\mathrm{Beta}(\xi(n_0),\xi(n_0)),
    \qquad
    \nu\sim\mathrm{Gamma}(7.5,1),
\]
where $\rho$ and $\nu$ are the mean and concentration parameters of the beta working likelihood, respectively. 
For BETEL and RETEL, we place the corresponding prior directly on the rescaled mean,
\[
    \mu\sim\mathrm{Beta}(\xi(n_0),\xi(n_0)).
\]
Thus all Bayes-assisted methods encode the same prior belief that the rectifier is close to zero, while differing in whether this belief enters through a beta working likelihood or directly through the empirical-likelihood pseudo-posterior.

\section{Additional experimental results}
\label{app:additional_results}

This section presents additional experimental results to complement the ones in \cref{sec:experiments}.

\subsection{Synthetic data}
\label{app:synthetic}

\subsubsection{Additional baselines}
\label{app:additional_baselines}

\Cref{fig:simulation_beta_mixture_baselines} shows the full-baseline version of the beta-mixture experiments in \cref{fig:simulation_beta_mixture_main}, while \cref{fig:simulation_bernoulli_baselines,fig:simulation_beta_baselines} show the full results on the Bernoulli and beta experiments, respectively.
\begin{figure}
    \centering
    \includegraphics[width=\textwidth]{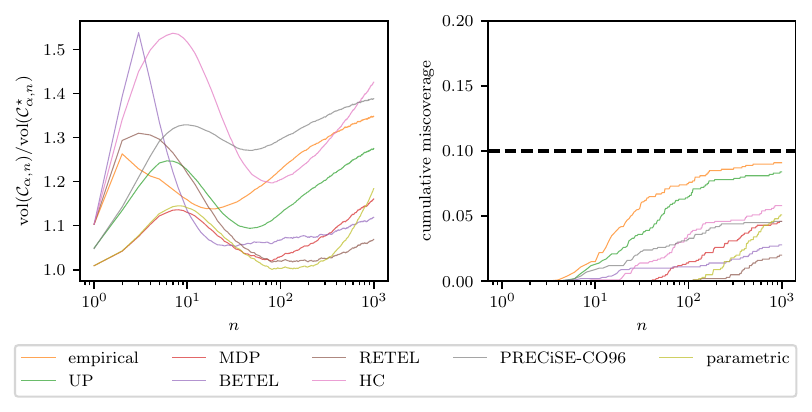}
    \caption{
        Full-baseline version of the synthetic beta-mixture experiment in \cref{fig:simulation_beta_mixture_main}. 
        The left and right panels show average CS length relative to the oracle and empirical cumulative miscoverage, respectively, over $1000$ repetitions;
        the dashed line marks $\alpha = 0.1$.
    }
    \label{fig:simulation_beta_mixture_baselines}
\end{figure}
\begin{figure}
    \centering
    \includegraphics[width=\textwidth]{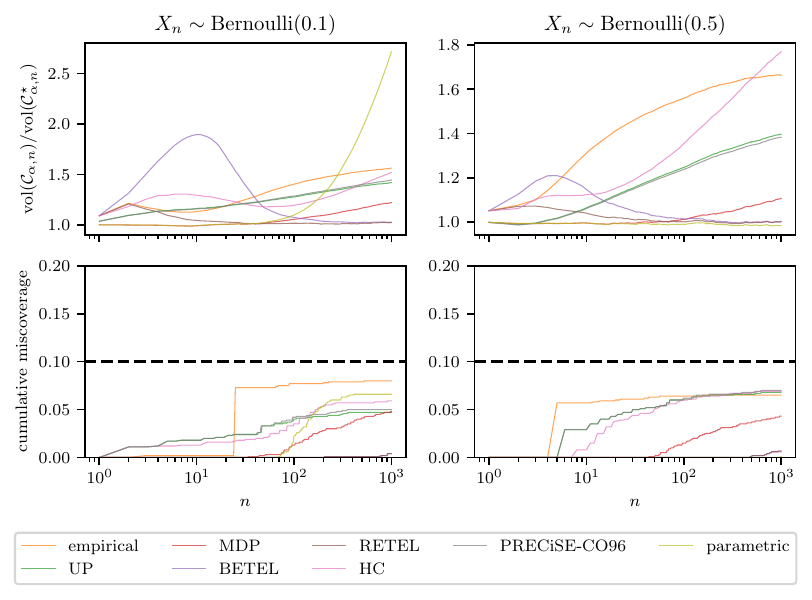}
    \caption{
        Synthetic Bernoulli experiments with all baselines. 
        The left and right columns correspond to $X_n\sim\mathrm{Bernoulli}(0.1)$ and $X_n\sim\mathrm{Bernoulli}(0.5)$, respectively. 
        The top and bottom rows show average CS length relative to the oracle and empirical cumulative miscoverage, respectively, over $1000$ repetitions;
        the dashed line marks $\alpha = 0.1$.
    }
    \label{fig:simulation_bernoulli_baselines}
\end{figure}
\begin{figure}
    \centering
    \includegraphics[width=\textwidth]{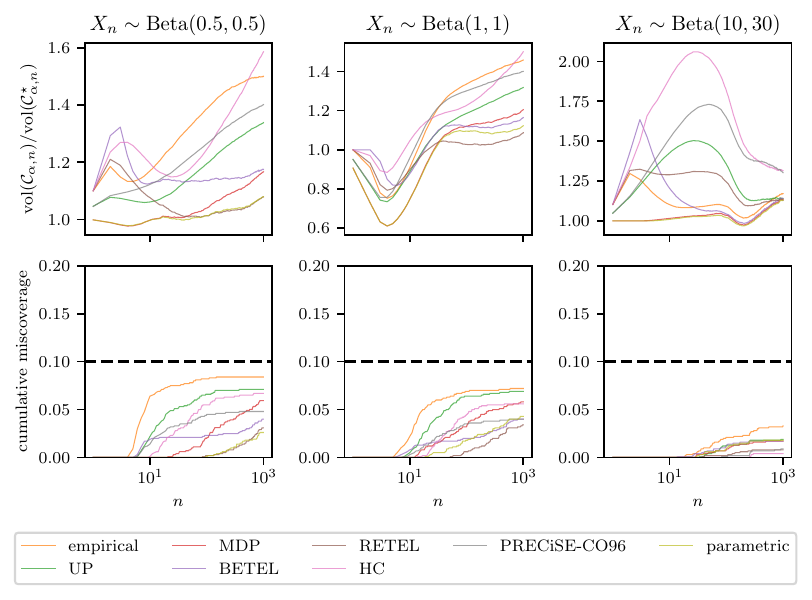}
    \caption{
        Synthetic beta experiments with all baselines. 
        The left, centre, and right columns correspond to $X_n\sim\mathrm{Beta}(0.5, 0.5)$, $X_n\sim\mathrm{Beta}(1, 1)$, and $X_n\sim\mathrm{Beta}(10, 30)$, respectively. 
        The top and bottom rows show average CS length relative to the oracle and empirical cumulative miscoverage, respectively, over $1000$ repetitions;
        the dashed line marks $\alpha = 0.1$.
    }
    \label{fig:simulation_beta_baselines}
\end{figure}
Overall, the results are consistent with the main text: accurate predictive distributions can substantially reduce confidence-sequence width without sacrificing coverage.
It is interesting to see that the parametric Bayesian predictive severely underperforms the other Bayes-assisted methods in the Bernoulli experiment with $p = 0.1$, where the true distribution is far from the beta working model.

\subsubsection{Additional priors}
\label{app:additional_priors}

\Cref{fig:simulation_beta_bayes,fig:simulation_bernoulli_bayes,fig:simulation_beta_bayes} compare the three prior regimes for the Bayes-assisted methods on the beta-mixture, Bernoulli, and beta simulations, respectively;
detailed prior specifications are provided in \cref{app:prior_specifications_synthetic}.
\begin{figure}
    \centering
    \includegraphics[width=\textwidth]{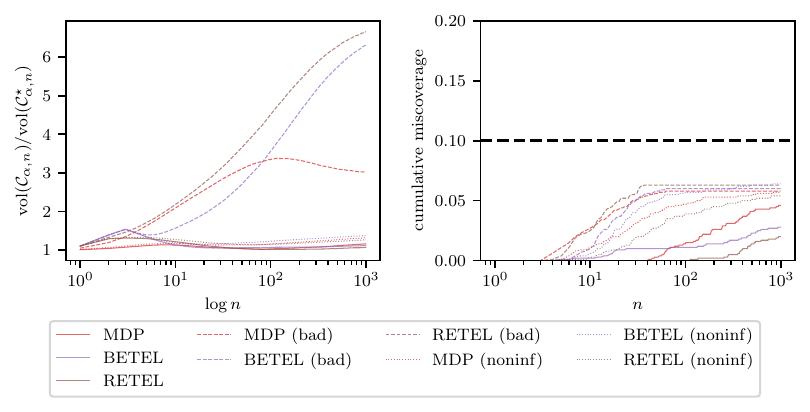}
    \caption{
        Synthetic beta-mixture experiment with different prior specifications for the same setting as \cref{fig:simulation_beta_mixture_main,fig:simulation_beta_mixture_baselines}. 
        The left and right panels show average CS length to the oracle and empirical cumulative miscoverage, respectively, over $1000$ repetitions. 
        Solid, dotted, and faint solid curves correspond to informative, misspecified, and non-informative priors, respectively;
        the dashed horizontal line marks $\alpha = 0.1$.
    }
    \label{fig:simulation_beta_mixture_bayes}
\end{figure}
\begin{figure}
    \centering
    \includegraphics[width=\textwidth]{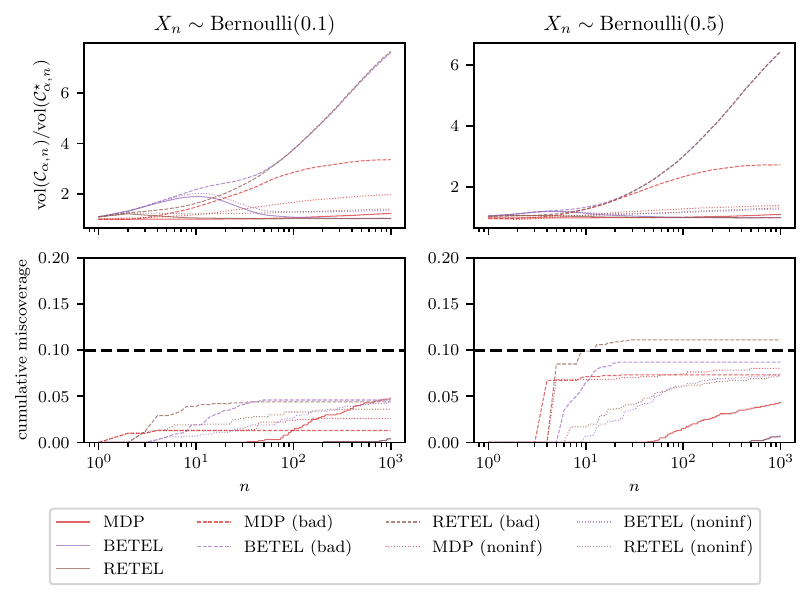}
    \caption{
        Synthetic Bernoulli experiment with different prior specifications for the same setting as in \cref{fig:simulation_bernoulli_baselines}. 
        The top and bottom rows show average CS length relative to the oracle and empirical cumulative miscoverage, respectively, over $1000$ repetitions.
        Solid, dotted, and faint solid curves correspond to informative, misspecified, and non-informative priors, respectively;
        the dashed horizontal line marks $\alpha = 0.1$.
    }
    \label{fig:simulation_bernoulli_bayes}
\end{figure}
\begin{figure}
    \centering
    \includegraphics[width=\textwidth]{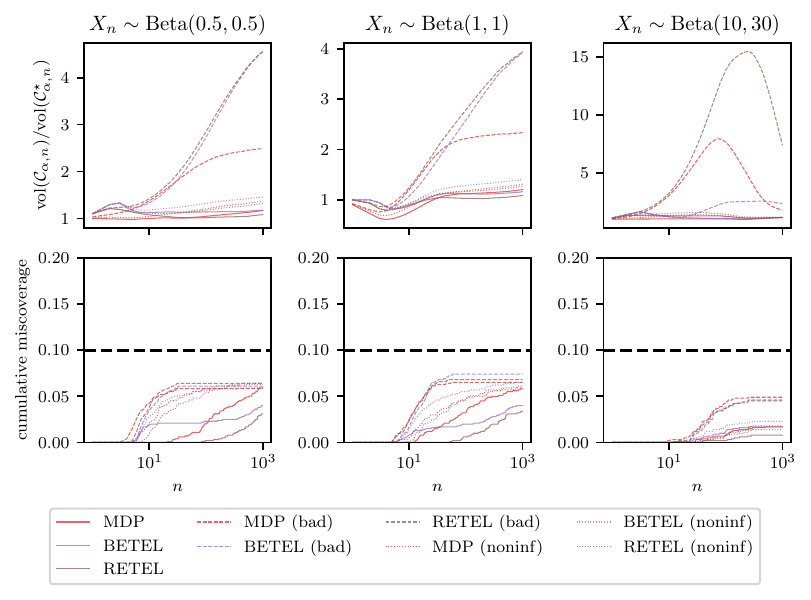}
    \caption{
        Synthetic beta experiment with different prior specifications for the same setting as in \cref{fig:simulation_beta_baselines}. 
        The top and bottom rows show average CS length relative to the oracle and empirical cumulative miscoverage, respectively, over $1000$ repetitions.
        Solid, dotted, and faint solid curves correspond to informative, misspecified, and non-informative priors, respectively;
        the dashed horizontal line marks $\alpha = 0.1$.
    }
    \label{fig:simulation_beta_bayes}
\end{figure}
The overall message is clear: the quality of the prior information incorporated into the predictive distribution can have a significant impact on efficiency, with misspecified priors leading to wider confidence sequences, but coverage is preserved across all prior regimes (up to random fluctuations in the $\mathrm{Bernoulli}(0.5)$ example, where one of the misspecified Bayes-assisted methods slightly undercovers).

\subsection{Best-arm identification for LLM evaluation}
\label{app:LLM_eval_results}

\subsubsection{Additional baselines}

\Cref{tab:method_comparison_spear_baselines} reports the full results for the best-arm identification experiment in \cref{sec:experiments}.
\begin{table}
    \centering
    \caption{Full-baseline version of the LLM best-arm identification experiment with Spearman score in \cref{tab:method_comparison_spear}.}
    \label{tab:method_comparison_spear_baselines}
    \begin{tabular}{lccc}
        \toprule
        Method & Avg Pulls (s.e.) & Avg rank (s.e.)  & W/T/L vs Empirical \\
        \midrule
        BETEL & 85.28 (3.94) & 1.78 (0.15) & 35/2/13 \\
        RETEL & 93.12 (3.65) & 2.64 (0.21) & 27/2/21 \\
        MDP & 97.64 (4.30) & 3.04 (0.13) & 24/4/22 \\
        Empirical & 98.48 (5.67) & 3.56 (0.32) & -- \\
        Parametric & 102.32 (4.63) & 4.30 (0.20) & 22/1/27 \\
        UP & 106.68 (4.71) & 4.96 (0.15) & 15/5/30 \\
        PRECiSE-CO96 & 130.72 (5.45) & 7.08 (0.10) & 2/1/47 \\
        HC & 137.16 (4.89) & 7.58 (0.09) & 3/1/46 \\
        \bottomrule
    \end{tabular}
\end{table}
Overall, the results presented in the main text stand, with the robust Bayes-assisted methods typically outperforming the prior-free baselines on this task.
The additional prior-free baselines, PRECiSE-CO96 and HC, decisively underperform all methods reported in the main text, and the parametric Bayesian predictive slightly underperforms the empirical predictive, possibly suggesting that the beta working model may not be a good fit for this example.

\subsubsection{Statistical significance of the best-arm identification results}
\label{app:LLM_eval_significance}

To establish the statistical significance of the performance disparities among the evaluated methods in \cref{tab:method_comparison_spear,tab:method_comparison_spear_baselines}, we conduct pairwise comparisons across all unique combinations of algorithms.
For every unordered pair, denoted as $\{A,B\}$, we employ a two-sided Wilcoxon signed-rank test \citep{Wilcoxon1945} to detect structural differences in the number of pulls required to converge.
To rigorously control the family-wise error rate stemming from this testing matrix, the resulting $p$-values are adjusted utilizing the Holm-Bonferroni step-down procedure \citep{Holm1979}, bounding the global significance level at $\alpha=0.05$.
When a statistically significant difference is established, the superior algorithm is subsequently identified as the one exhibiting the lower mean number of pulls. Ties between paired samples are handled via the zero-split method. The results are presented in \cref{tab:sign_spear}.
\begin{table}[H]
\caption{Pairwise statistical comparison for the Spearman score: Is Method A (row) significantly faster than Method B (column)?}
\centering
\begin{tabular}{l|cccccccc}
 & \rotatebox{90}{BETEL} & \rotatebox{90}{RETEL} & \rotatebox{90}{MDP} & \rotatebox{90}{Empirical} & \rotatebox{90}{Parametric} & \rotatebox{90}{UP} & \rotatebox{90}{PRECiSE-CO96} & \rotatebox{90}{HC} \\
\hline
BETEL & - & \textbf{Yes} & \textbf{Yes} & \textbf{Yes} & \textbf{Yes} & \textbf{Yes} & \textbf{Yes} & \textbf{Yes} \\
RETEL & No & - & \textbf{Yes} & ? & \textbf{Yes} & \textbf{Yes} & \textbf{Yes} & \textbf{Yes} \\
MDP & No & No & - & ? & \textbf{Yes} & \textbf{Yes} & \textbf{Yes} & \textbf{Yes} \\
Empirical & No & ? & ? & - & ? & ? & \textbf{Yes} & \textbf{Yes} \\
Parametric & No & No & No & ? & - & \textbf{Yes} & \textbf{Yes} & \textbf{Yes} \\
UP & No & No & No & ? & No & - & \textbf{Yes} & \textbf{Yes} \\
PRECiSE-CO96 & No & No & No & No & No & No & - & \textbf{Yes} \\
HC & No & No & No & No & No & No & No & - \\
\hline
\end{tabular}
\label{tab:sign_spear}
\end{table}

\subsubsection{Prediction-powered inference}\label{app:ppi_results}

\Cref{fig:ppi_baselines} shows the full-baseline version of the prediction-powered inference experiment in \cref{fig:ppi_main}.
\begin{figure}
    \centering
    \includegraphics[width=\textwidth]{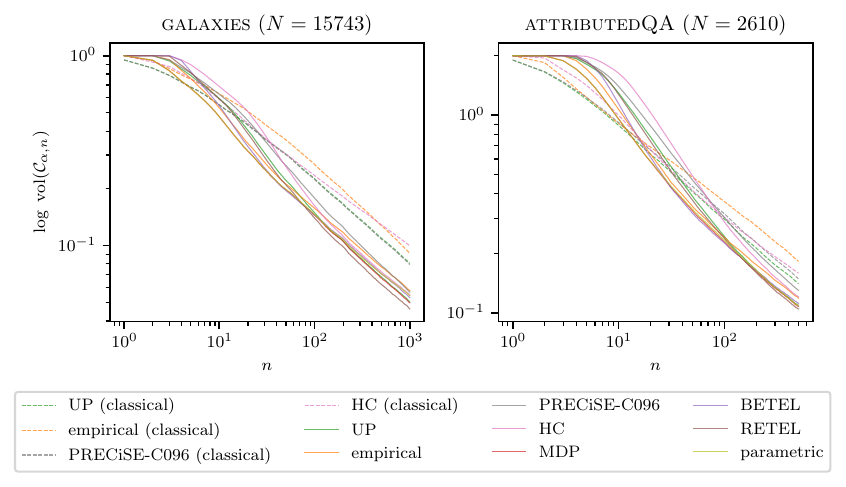}
    \caption{
        Full-baseline version of prediction-powered mean estimation experiment in \cref{fig:ppi_main}.
        The left and right panels show average CS length over 1000 repetitions for the \textsc{galaxies} and \textsc{attributedQA} datasets.
        Curves labelled ``classical'' use only labelled observations, while the remaining curves use the PPI construction.
    }
    \label{fig:ppi_baselines}
\end{figure}
While the results remain consistent with the main text, it is interesting to see that the parametric Bayesian predictive performs well on this task, especially on the \textsc{attributedQA} dataset, where the working model seems to be a good fit for the data distribution.

\Cref{tab:ppi_attributedqa_test_full} reports the full comparison for the sequential test of $H_0:\theta^\star\leq 0$ on the \textsc{attributedQA} dataset, including additional classical and PPI baselines not reported in the main text.
\begin{table}
    \centering
    \caption{
        Full comparison for the AttributedQA sequential test of $H_0:\theta^\star\leq 0$.
        Avg labels is the average number of labelled examples required to reject the null over $100$ repetitions.
        Avg rank ranks methods by stopping time within each repetition.
        W/T/L gives paired wins, ties, and losses against the PPI empirical predictive baseline.
    }
    \label{tab:ppi_attributedqa_test_full}
    \begin{tabular}{lccc}
        \toprule
        Method & Avg labels (s.e.) & Avg rank (s.e.) & W/T/L vs Empirical \\
        \midrule
        Parametric & 52.23 (3.46) & 2.62 (0.15) & 64 / 29 / 7 \\
        MDP & 52.79 (3.65) & 3.58 (0.16) & 65 / 31 / 4 \\
        BETEL & 54.60 (3.98) & 3.95 (0.23) & 63 / 13 / 24 \\
        RETEL & 58.24 (3.12) & 4.87 (0.25) & 53 / 3 / 44 \\
        Empirical & 65.17 (4.77) & 4.98 (0.29) & -- \\
        UP & 66.95 (3.61) & 6.27 (0.18) & 33 / 11 / 56 \\
        HC & 84.85 (3.71) & 9.04 (0.20) & 14 / 9 / 77 \\
        PRECISE-CO96 & 84.99 (4.34) & 9.01 (0.18) & 8 / 3 / 89 \\
        UP (classical) & 91.44 (7.01) & 7.01 (0.32) & 25 / 3 / 72 \\
        HC (classical) & 97.91 (8.19) & 7.61 (0.28) & 23 / 2 / 75 \\
        PRECISE-CO96 (classical) & 100.39 (7.25) & 8.28 (0.32) & 17 / 2 / 81 \\
        Empirical (classical) & 182.69 (11.38) & 10.78 (0.31) & 12 / 0 / 88 \\
        \bottomrule
    \end{tabular}
\end{table}
While remaining consistent with the comments in the main text, the full comparison highlights again the strong performance of the parametric Bayesian predictive on this task, where it competes closely with MDP.

    % Bibliography file (usually '*.bib')

\end{document}